\theoremstyle{plain}
\newtheorem{thm}{Theorem}[section]
\newtheorem{cor}[thm]{Corollary}
\newtheorem{prop}[thm]{Proposition}
\theoremstyle{definition}
\newtheorem{asmp}{Assumption}[section]
\newcommand{\cD}{\mathcal{D}}
\newcommand{\cF}{\mathcal{F}}
\newcommand{\cR}{\mathcal{R}}
\newcommand{\cW}{\mathcal{W}}
\newcommand{\cX}{\mathcal{X}}
\newcommand{\cY}{\mathcal{Y}}
\newcommand{\bbE}{\mathbb{E}}
\newcommand{\bbP}{\mathbb{P}} 
\newcommand{\bbR}{\mathbb{R}}
\newcommand{\scrP}{\mathscr{P}}
\newcommand{\scrS}{\mathscr{S}}
\newcommand{\1}{\mathbbm{1}}
\newcommand{\la}{\langle}
\newcommand{\ra}{\rangle}
\newcommand{\esssup}{\mathop{\rm{ess}~\sup}}
\newcommand{\E}{\mathbf{E}}
\newcommand{\W}{\mathcal{W}}
\newcommand{\softmax}{\mathop{}\mathrm{softmax}}
\newcommand{\sgn}{\mathop{}\mathrm{sgn}}
\newcommand{\proj}{\mathop{}\mathrm{proj}}
\newcommand{\trP}{\widehat{P}}
\newcommand{\ttP}{\widecheck{P}}
\newcommand{\trUpsilon}{\widehat{\Upsilon}}
\newcommand{\ttUpsilon}{\widecheck{\Upsilon}}
\title{Wasserstein distributional robustness of neural networks}
\author{
  Xingjian Bai \\
    Department of Computer Science\\
    University of Oxford, UK\\
  \texttt{xingjian.bai@sjc.ox.ac.uk}
      \And 
  Guangyi He\\
  Mathematical Institute\\ 
  University of Oxford, UK\\
  \texttt{guangyihe2002@outlook.com}
    \And 
  Yifan Jiang\\
  Mathematical Institute\\ 
  University of Oxford, UK\\
  \texttt{yifan.jiang@maths.ox.ac.uk}
  \And
  Jan Ob\l\'oj\thanks{Corresponding author. \texttt{www.maths.ox.ac.uk/people/jan.obloj}}\\
  Mathematical Institute\\ 
  University of Oxford, UK\\
  \texttt{jan.obloj@maths.ox.ac.uk}
}
\begin{document}

\maketitle

\begin{abstract}

Deep neural networks are known to be vulnerable to adversarial attacks (AA). For an image recognition task, this means that a small perturbation of the original can result in the image being misclassified. Design of such attacks as well as methods of adversarial training against them are subject of intense research. We re-cast the problem using techniques of Wasserstein distributionally robust optimization (DRO) and obtain novel contributions leveraging recent insights from DRO sensitivity analysis. We consider a set of distributional threat models. Unlike the traditional pointwise attacks, which assume a uniform bound on perturbation of each input data point, distributional threat models allow attackers to perturb inputs in a non-uniform way. We link these more general attacks with questions of out-of-sample performance and Knightian uncertainty. To evaluate the distributional robustness of neural networks, we propose a first-order AA algorithm and its multistep version. Our attack algorithms include Fast Gradient Sign Method (FGSM) and Projected Gradient Descent (PGD) as special cases. Furthermore, we provide a new asymptotic estimate of the adversarial accuracy against distributional threat models. The bound is fast to compute and first-order accurate, offering new insights even for the pointwise AA. It also naturally yields out-of-sample performance guarantees. We conduct numerical experiments on the CIFAR-10 dataset using DNNs on RobustBench to illustrate our theoretical results. Our code is available at \url{https://github.com/JanObloj/W-DRO-Adversarial-Methods}.

\end{abstract}


\section{Introduction}

Model uncertainty is an ubiquitous phenomenon across different fields of science. In decision theory and economics, it is often referred to as the \emph{Knightian uncertainty} \citep{Kni21}, or the \emph{unknown unknowns}, to distinguish it from the \emph{risk} which stems from the randomness embedded by design in the scientific process, see \citet{HM16} for an overview. 
Transcribing to the context of data science, risk refers to the randomness embedded in a training by design, e.g., through random initialization, drop-outs etc., and uncertainty encompasses the extent to which the dataset is an adequate description of reality. \emph{Robustness}, the ability to perform well under uncertainty, thus relates to several themes in ML including adversarial attacks, out-of-sample performance and out-of-distribution performance. In this work, we mainly focus on the former but offer a unified perspective on robustness in all of its facets. 


Vulnerability of DNNs to crafted adversarial attacks (AA), diagnosed in \citet{BCM+13, GSS15}, relates to the ability of an attacker to manipulate network's outputs by changing the input images only slightly -- often in ways imperceptible to a human eye. As such, AA are of key importance for security-sensitive applications and an active field of research. Most works so far have focused on attacks under \emph{pointwise} \(l_{p}\)-bounded image distortions but a growing stream of research, pioneered by \citet{SJ17} and \citet{SND18}, frames the problem using Wasserstein distributionally robust optimization (DRO). We offer novel contributions to this literature.

Our key contributions can be summarized as follows. \textbf{1)} We propose a unified approach to adversarial attacks and training based on sensitivity analysis for Wasserstein DRO. We believe this approach, leveraging results from \citet{BDOW21}, is better suited for gradient-based optimization methods than duality approach adopted in most of the works to date. We further link the adversarial accuracy to the adversarial loss, and investigate the out-of-sample performance. \textbf{2)} We derive a general adversarial attack method. As a special case, this recovers the classical FGSM attack lending it a further theoretical underpinning. 
However, our method also allows to carry out attacks under a \emph{distributional threat model} which, we believe, has not been done before. \textbf{3)} We develop certified bounds on adversarial accuracy, applicable to a general threat, including the classical pointwise perturbations. The bounds are first-order accurate and much faster to compute than, e.g., the AutoAttack \citep{CH20} benchmark.
The performance of our methods is documented using CIFAR-10 dataset \citep{Kri09Learning} and neural networks from RobustBench \citep{CAS+21}.

\section{Related Work}

\paragraph{Adversarial Attack (AA).}
Original research focused on the \emph{pointwise} \(l_{p}\)-bounded image distortion. Numerous attack methods under this threat model have been proposed in the literature, including Fast Gradient Sign Method (FGSM) \citep{GSS15}, Projected Gradient Descent (PGD) \citep{MMS+18}, CW attack \citep{CW17}, etc.
In these white-box attacks, the attacker has full knowledge of the neural network.
There are also black-box attacks, such as Zeroth Order Optimization (ZOO) \citep{CZS+17}, Boundary Attack \citep{BRB18}, and Query-limited Attack \citep{IEAL18}. AutoAttack \citep{CH20}, an ensemble of white-box and black-box attacks, provides a useful benchmark for \emph{pointwise} \(l_{p}\)-robustness of neural networks.
\paragraph{Adversarial Defense.}
Early works on data augmentation \citep{GSS15,MMS+18,TKP+18} make use of strong adversarial attacks to augment the training data with adversarial examples; more recent works \citep{GRW+21,XSC22,WPD+23} focus on adding randomness to training data through generative models such as GANs and diffusion models.
\citet{ZYJ+19} consider the trade-off between robustness and accuracy of a neural network via TRADES, a regularized loss. Analogous research includes MART \citep{WZY+20} and SCORE \citep{PLY+22}.
Other loss regularization methods such as adversarial distributional training \citep{DDP+20} and adversarial weight perturbation \citep{WXW20} have been shown to smooth the loss landscape and improve the robustness.
In addition, various training techniques can be overlaid to improve robustness, including droup-out layers, early stopping and parameter fine-tuning \citet{SWMJ20}.
The closest to our setting are \citet{SND18,GG22} which employ Wasserstein penalization and constraint respectively.

\paragraph{Robust Performance Bounds.} 
Each AA method gives a particular upper bound on the adversarial accuracy of the network. 
In contrast, research on \emph{certified robustness} aims at classifying images which are robust to all possible attacks allowed in the threat model and thus providing an attack-agnostic lower bound on the classification accuracy. 
To verify robustness of images, deterministic methods using off-the-shelf solvers \citep{TXT19}, relaxed linear programming \citep{WK18, WZC+18a} or semi-definite programming \citep{RSL18,DDK+20} have been applied.
\citet{HA17,WZC+18} derive Lipschitz-based metrics to characterize the maximum distortion an image can uphold; \cite{CRK19} constructs a certifiable classifier by adding smooth noise to the original classifier; see  \citet{LXL23} for a review.


\paragraph{Distributionally Robust Optimization (DRO).}
Mathematically, it is formulated as a min-max problem
\begin{equation}
    \label{eqn-dro}
    \inf_{\theta\in \Theta} \sup_{Q\in\scrP} \E_{Q}[f_{\theta}(Z)],
\end{equation}
where we minimize the worst-case loss over all possible distributions \(Q\in\scrP\). 
In financial economics, such criteria appear in the context of multi-prior preferences, see \citep{GS89,FW15}. We refer to \citep{RM19} for a survey of the DRO. 

We focus on the Wasserstein ambiguity set \(\scrP=B_{\delta}(P)\), which is a ball  centered at the reference distribution \(P\) with radius \(\delta\) under the Wasserstein distance. We refer to \citet{GK22} for a discussion of many advantages of this distance. 
In particular, measures close to each other can have different supports which is key in capturing data perturbations, see \citet{SND18}. \citet{SJ17} interpreted \emph{pointwise} adversarial training as a special case of Wasserstein DRO (W-DRO). More recently, \citet{BLT+22Unified} unified various classical adversarial training methods, such as PGD-AT, TRADES, and MART, under the W-DRO framework. 

W-DRO, while compelling theoretically, is often numerically intractable. 
In the literature, two lines of research have been proposed to tackle this problem. The duality approach rewrites \eqref{eqn-dro} into a max-min problem where the inner minimization is a more tractable univariate problem.
We refer to \citet{MK18} for the data-driven case, \citet{BM19,BDT20,GK22} for general probability measures and \citet{HHLD22} for a further application with coresets. The second approach, which we adopt here, considers the first order approximation to the original DRO problem. This can be seen as computing the sensitivity of the value function with respect to the model uncertainty as derived in \citet{BDOW21}, see also \citet{Lam16,GG22} for analogous results in different setups.

\section{Preliminaries}
\paragraph{Image Classification Task.}
An image is interpreted as a tuple \((x,y)\) where the feature vector \(x\in \cX\) encodes the graphic information and \(y\in\cY=\{1,\dots,m\}\) denotes the class, or tag, of the image. W.l.o.g., we take \(\cX=[0,1]^{n}\). 
A distribution of labelled images corresponds to a probability measure \(P\) on \( \cX\times\cY\). We are given the training set $\cD_{tr}$ and the test set $\cD_{tt}$, subsets of $\cX\times\cY$, i.i.d. sampled from $P$. We denote $\trP$ (resp.\ $\ttP$) the empirical measure of points in the training set (resp.\ test set), i.e., $\trP=\frac{1}{|\cD_{tr}|}\sum_{(x,y)\in \cD_{tr}}\delta_{(x,y)}$.
A neural network is a map \(f_{\theta}: \cX\to\bbR^{m}\) 
\begin{equation*}
    \qquad\qquad f_{\theta}(x)= f^{l}\circ\cdots\circ f^{1}(x),\qquad \text{where }    f^{i}(x)=\sigma(w^{i}x+b^{i}),
\end{equation*}
\(\sigma\) is a nonlinear activation function, and \(\theta=\{w^{i},b^{i}:1\leq i\leq l\}\) is the collection of parameters. 
We denote \(S\) the set of images equipped with their labels generated by $f_\theta$, i.e., 
\begin{equation*}
    S=\Bigl\{(x,y)\in  \cX\times\cY: \arg\max_{1\leq i\leq m} f_{\theta}(x)_{i}=\{y\}\Bigr\}.
\end{equation*}
The aim of image classification is to find a network \(f_\theta\) with high (clean) prediction accuracy $A:=P(S)=\E_{P}[\1_{S}]$. To this end, $f_\theta$ is trained solving\footnote{In practice, $P$ is not accessible and we use $\trP$ or $\ttP$ instead, e.g., in \eqref{eqn-loss}  we replace $P$ with $\trP$ and then compute the clean accuracy as $\ttP(S)$. In our experiments we make it clear which dataset is used.
} the stochastic optimization problem 
\begin{equation}
    \label{eqn-loss}
    \inf_{\theta\in\Theta}\E_{P}[L(f_{\theta}(x),y)],
\end{equation}
where \(\Theta\) denotes the set of admissible parameters, and \(L\) is a (piecewise) smooth loss function, e.g., cross entropy loss\footnotemark \(\mathrm{CE}:\bbR^{m}\times\cY\to \bbR\) given by
\footnotetext{By convention, cross entropy is a function of two probability measures. 
In this case, we implicitly normalize the logit \(z\) by applying \(\softmax\), and we associate a class \(y\) with the Dirac measure \(\delta_{y}\).}
\begin{equation}
    \label{eqn-ce}
    \mathrm{CE}(z,y)=-(\log \circ \softmax (z))_{y}.
\end{equation}

\paragraph{Wasserstein Distances.}
Throughtout, \((p,q)\) is a pair of conjugate indices, \(1/p+1/q=1\), with \(1\leq p\leq \infty\). We consider a norm \(\|\cdot\|\) on $\cX$ and denote \(\|\cdot\|_*\) its dual,  \(\|\tilde x\|_*=\sup\{\langle x,\tilde x\rangle: \|x\|\leq 1\}\). Our main interest is in 
 \(\|\cdot\|=\|\cdot\|_{r}\) the \(l_{r}\)-norm for which \(\|\cdot\|_*=\|\cdot\|_{s}\), where \((r,s)\) are conjugate indices, $1\leq r\leq \infty$. We consider adversarial attacks which perturb the image feature \(x\) but not its label \(y\). Accordingly, 
 we define a pseudo distance\footnote{Our results can be adapted to regression tasks where the class label \(y\) is continuous and sensitive to the perturbation. In such a setting a different \(d\) would be appropriate.} \(d\)  on  \( \cX\times \cY\) as
\begin{equation}
    \label{eqn-d}
    d((x_{1},y_{1}),(x_{2},y_{2}))=\|x_{1}-x_{2}\|+\infty\1_{\{y_{1}\neq y_{2}\}}.
\end{equation}
 We denote \(\Pi(P,Q)\) the set of couplings between \((x,y)\) and \((x',y')\) whose first margin is \(P\) and second margin is \(Q\), and \(T_{\#}P:=P\circ T^{-1}\) denotes the pushforward measure of \(P\) under a map \(T\).
 
 The \(p\)-Wasserstein distance, $1\leq p<\infty$, between probability measures \(P\) and \(Q\) on \( \cX\times\cY\) is
    \begin{equation}
        \W_{p}(P,Q):=\inf\left\{\E_\pi[d((x_{1},y_{1}),(x_{2},y_{2}))^{p}]:\pi\in \Pi(P,Q)\right\}^{1/p}.
    \end{equation}
The $\infty$-Wasserstein distance \(\W_{\infty}\) is given by
    \begin{equation}
        \W_{\infty}(P,Q):=\inf\{\pi\text{--}\esssup d((x_{1},y_{1}),(x_{2},y_{2})):\pi\in \Pi(P,Q)\}.
    \end{equation}
We denote the \(p\)-Wasserstein ball centered at \(P\) with radius \(\delta\) by \(B_{\delta}(P)\).
We mainly consider the cases where \(p,r\in\{2,\infty\}\).
Intuitively, we can view \(p\) as the index of image-wise flexibility and \(r\) as the index of pixel-wise flexibility. Unless $p=1$ is explicitly allowed, $p>1$ in what follows.

\section{Wasserstein Distributional Robustness: adversarial attacks and training}
\label{sec-WD}
\paragraph{W-DRO Formulation.} The Wasserstein DRO (W-DRO) formulation of a DNN training task is given by:
\begin{equation}
    \label{eqn-advl}
    \inf_{\theta\in\Theta}\sup_{Q\in B_{\delta}(P)}\E_{Q}[L(f_{\theta}(x),y)],
\end{equation}
where \(B_{\delta}(P)\) is the \(p\)-Wasserstein ball centered at \(P\) and \(\delta\) denotes the budget of the adversarial attack. In practice, $P$ is not accessible and is replaced with $\trP$.
When \(p=\infty\), the above adversarial loss coincides with the pointwise adversarial loss of \citet{MMS+18} given by
\begin{equation*}
    \inf_{\theta\in\Theta}\E_{ P}[\sup\{L(f_{\theta}(x'),y):\|x'-x\|\leq \delta\}].
\end{equation*}
Recently, \citet{BLT+22Unified} considered a more general criterion they called  \emph{unified distributional robustness}. It can be re-cast equivalently as an \emph{extended} W-DRO formulation using couplings:
\begin{equation}
 \label{eqn-advl-gen}
    \inf_{\theta\in\Theta}\sup_{\pi\in \Pi_{\delta}(P, \cdot)}\E_{\pi}[J_{\theta}(x,y,x',y')],
\end{equation}
where \(\Pi_{\delta}(P,\cdot)\) is the set of couplings between \((x,y)\) and \((x',y')\) whose first margin is \(P\) and the second margin is within a  Wasserstein \(\delta\)-ball centered at \(P\). This formulation was motivated by the observation that for \(p=\infty\), taking \(J_{\theta}(x,y,x',y')=L(f_{\theta}(x),y)+\beta L(f_{\theta}(x),f_{\theta}(x'))\), it retrieves the TRADES loss of \citep{ZYJ+19} given by 
\begin{equation*}
    \inf_{\theta\in\Theta}\E_{P}\Bigl[L(f_{\theta}(x),y)+\beta\sup_{x': \|x-x'\|\leq \delta}L(f_{\theta}(x),f_{\theta}(x'))\Bigr].
\end{equation*}

\paragraph{W-DRO Sensitivity.}
In practice, training using \eqref{eqn-advl}, let alone \eqref{eqn-advl-gen}, is computationally infeasible. 
To back propagate \(\theta\) it is essential to understand the inner maximization problem denoted by
\begin{equation*}
    V(\delta)=\sup_{Q\in B_{\delta}(P)}\E_{Q}[J_{\theta}(x,y)],
\end{equation*}
where we write \(J_{\theta}(x,y)=L(f_{\theta}(x),y)\).
One can view the adversarial loss \(V(\delta)\) as a certain regularization of the vanilla loss.
Though we are not able to compute the exact value of \(V(\delta)\) for neural networks with sufficient expressivity, DRO sensitivity analysis results allow us to derive a numerical approximation to \(V(\delta)\) and further apply gradient-based optimization methods. 
This is the main novelty of our approach --- previous works considering a W-DRO formulation mostly relied on duality results in the spirit of \citet{BM19} to rewrite \eqref{eqn-advl}.

\begin{asmp}
    \label{asmp-j}
    We assume the map
    \(
        (x,y)\mapsto J_{\theta}(x,y)
    \)
    is \(L\)-Lipschitz under \(d\), i.e., \[|J_{\theta}(x_1,y_1)-J_{\theta}(x_2,y_2)|\leq L d((x_1,y_1),(x_2,y_2)).\]
\end{asmp}
The following result follows readily from \cite[Theorem 2.2]{BDOW21} and its proof.
\begin{thm}
    \label{thm-u}
    Under Assumption \ref{asmp-j}, the following first order approximations hold:
    \begin{enumerate}[(i)]
        \item \label{prop-u:i}
              \(
              V(\delta)=V(0)+\delta\Upsilon+o(\delta),
              \)
              where
              \begin{equation*}
                  \Upsilon=\Bigl(\E_{P}\|\nabla_{x}J_{\theta}(x,y)\|_{*}^{q}\Bigr)^{1/q}.
              \end{equation*}
        \item \label{prop-u:ii}
              \(V(\delta)=\E_{Q_{\delta}}[J_{\theta}(x,y)]+o(\delta),\)
              where
              \begin{equation*}
                  Q_{\delta}=\Bigl[(x,y)\mapsto \bigl(x+\delta h(\nabla_{x}J_{\theta}(x,y))\|\Upsilon^{-1}\nabla_{x}J_{\theta}(x,y)\|_{*}^{q-1},y\bigr)\Bigr]_{\#}P,
              \end{equation*}
              and \(h\) is uniquely determined by \(\la h(x),x\ra=\|x\|_{*}\).
    \end{enumerate}
\end{thm}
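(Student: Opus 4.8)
The plan is to reduce everything to the abstract first-order sensitivity result \cite[Theorem 2.2]{BDOW21}, which for a reference measure and a transport cost gives the right derivative at $0$ of $\delta\mapsto\sup_{Q\in B_\delta(P)}\E_Q[f]$ and, inside its proof, an explicit near-optimal perturbation of $P$; the work is then just to translate it into the present notation. The first step is to handle the pseudo-metric $d$ of \eqref{eqn-d}: any coupling in $\Pi(P,Q)$ of finite $\W_p$-cost (whether $p<\infty$ or $p=\infty$) must keep $y_1=y_2$ $\pi$-a.s., so $B_\delta(P)$ is exactly the set of laws of $(X',Y)$ with $(X,Y)\sim P$ and $X'$ a (possibly randomized) perturbation of $X$ with $\E\|X-X'\|^p\le\delta^p$ (resp.\ $\|X-X'\|\le\delta$ a.s.), the label being frozen. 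Hence $V(\delta)$ is an instance of the W-DRO value function of \cite{BDOW21}, with ground space $\cX\times\cY$, cost $\|\cdot\|$ on the $\cX$-fibres, objective $(x,y)\mapsto J_\theta(x,y)$, and differentiation taken in $x$ only.

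For \ref{prop-u:i} I would check the hypotheses of \cite[Theorem 2.2]{BDOW21}. Assumption \ref{asmp-j} makes $x\mapsto J_\theta(x,y)$ $L$-Lipschitz in $\|\cdot\|$, so by Rademacher's theorem $\nabla_x J_\theta$ exists a.e.\ with $\|\nabla_x J_\theta\|_*\le L$; thus $\Upsilon=(\E_P\|\nabla_x J_\theta\|_*^q)^{1/q}\le L<\infty$ and the growth/integrability conditions hold trivially. The only subtlety is that $\nabla_x J_\theta$ be defined $P$-a.s.; for the empirical measures used in practice this holds at generic data points for piecewise-smooth $L\circ f_\theta$, and otherwise it is a standing regularity assumption, as in \cite{BDOW21}. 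The expansion $V(\delta)=V(0)+\delta\Upsilon+o(\delta)$ is then precisely \cite[Theorem 2.2]{BDOW21}.

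For \ref{prop-u:ii} I would unwind the perturbation constructed in that proof. The first-order problem is $\max\{\E_P\langle\nabla_x J_\theta(x,y),u(x,y)\rangle:\E_P\|u\|^p\le\delta^p\}$. Freezing the magnitude $\|u(x,y)\|=t(x,y)$, the inner maximization is solved at $u=t\,h(\nabla_x J_\theta)$ with $h$ as in the statement (the unit vector attaining the dual norm, $\langle h(v),v\rangle=\|v\|_*$, $\|h(v)\|=1$), with value $\E_P[t\,\|\nabla_x J_\theta\|_*]$; a Hölder/Lagrange step then gives $t(x,y)\propto\|\nabla_x J_\theta(x,y)\|_*^{q-1}$, and $\E_P[t^p]=\delta^p$ pins the constant at $\delta\Upsilon^{-(q-1)}$ since $(q-1)p=q$. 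This is the map $T_\delta(x,y)=(x+\delta\,h(\nabla_x J_\theta(x,y))\,\|\Upsilon^{-1}\nabla_x J_\theta(x,y)\|_*^{q-1},\,y)$, so $Q_\delta=(T_\delta)_{\#}P\in B_\delta(P)$. Expanding $J_\theta$ along $T_\delta$ --- with $\|\nabla_x J_\theta\|_*\le L$ supplying dominated convergence and remainder control --- yields $\E_{Q_\delta}[J_\theta(x,y)]=V(0)+\delta\,\E_P[\langle\nabla_x J_\theta,h(\nabla_x J_\theta)\rangle\,\|\Upsilon^{-1}\nabla_x J_\theta\|_*^{q-1}]+o(\delta)$, and since $\langle\nabla_x J_\theta,h(\nabla_x J_\theta)\rangle=\|\nabla_x J_\theta\|_*$ this equals $V(0)+\delta\Upsilon+o(\delta)=V(\delta)+o(\delta)$ by \ref{prop-u:i}. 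At points with $\nabla_x J_\theta=0$, where $h$ is undefined (and where uniqueness may fail for $r\in\{1,\infty\}$), one sets the perturbation to $0$; these contribute nothing to $\Upsilon$ or to the expansion, so measurability of $T_\delta$ and all estimates are untouched.

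The main obstacle here is not any single delicate estimate --- global Lipschitzness of $J_\theta$ makes every integrability question automatic --- but the bookkeeping: confirming that the pseudo-metric $d$ genuinely collapses $V(\delta)$ onto the scalar-cost W-DRO problem of \cite{BDOW21}, securing $P$-a.s.\ differentiability of $J_\theta$, and transcribing the near-optimal perturbation and its normalization from the proof of \cite[Theorem 2.2]{BDOW21} into the pushforward $Q_\delta$ without errors in the exponents.
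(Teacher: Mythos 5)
Your proposal is correct and follows essentially the same route as the paper, which proves Theorem \ref{thm-u} simply by invoking \cite[Theorem 2.2]{BDOW21} and its proof; your reduction of the pseudo-metric $d$ to a fibrewise cost with frozen labels, the verification of the Lipschitz hypotheses, and the H\"older/Lagrange derivation of the exponent $q-1$ and normalization $\delta\Upsilon^{-(q-1)}$ are exactly the translation the paper leaves implicit. The details you supply (including the treatment of points where $\nabla_x J_\theta=0$ and the $P$-a.s.\ differentiability caveat) are accurate and consistent with the cited result.
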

The above holds for any probability measure, in particular with $P$ replaced consistently by an empirical measure $\trP$ or $\ttP$. 
In Figure \ref{fig-fo}, we illustrate the performance of our first order approximation of the adversarial loss on CIFAR-10 \citep{Kri09Learning} under different threat models. 
\begin{figure}[htbp]
    \centering
    \includegraphics[width=0.7\linewidth]{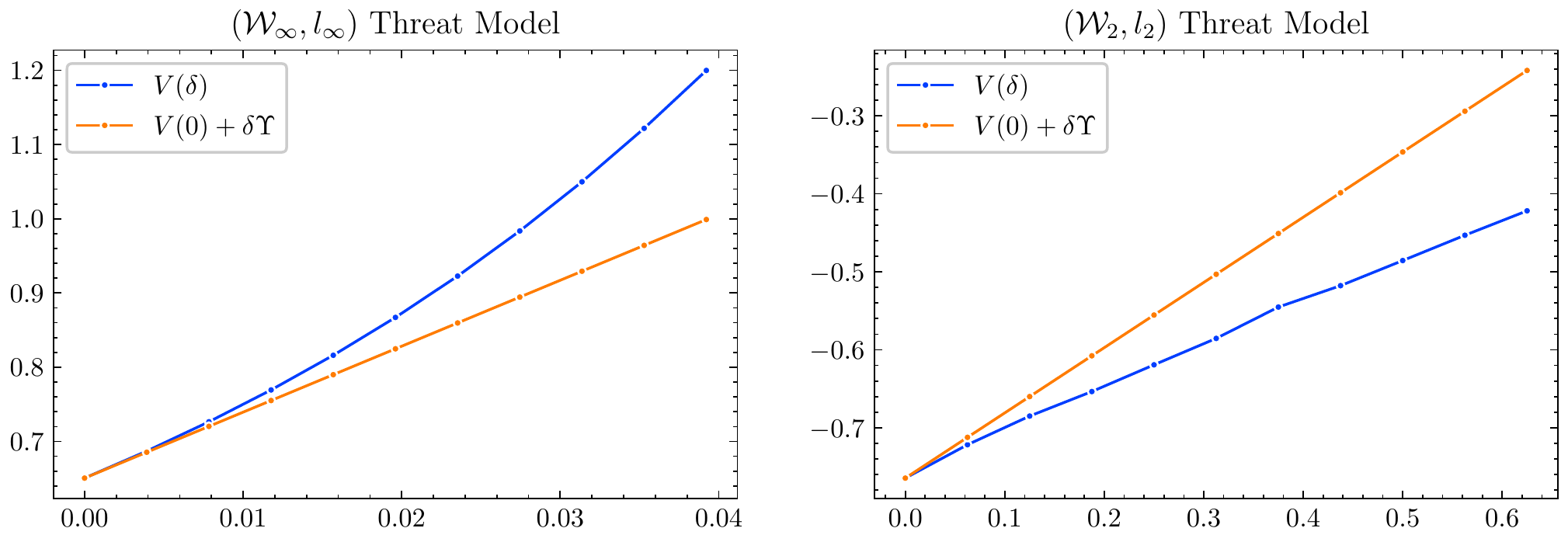} 
    \caption{Performance of the first order approximation for the W-DRO value derived in Theorem \ref{thm-u}.
    Left:  WideResNet-28-10 \citep{GQU20} under CE  loss \eqref{eqn-ce} and \((\cW_{\infty},l_{\infty})\) threat model with \(\delta=1/255,\dots,10/255\). Right:  WideResNet-28-10 \citep{WPD+23} under ReDLR loss \eqref{eqn-redlr} and \((\cW_{2},l_{2})\) threat models with \(\delta=1/16,\dots,10/16\).}
    \label{fig-fo}
\end{figure}

\paragraph{WD-Adversarial Accuracy.}
We consider an attacker with perfect knowledge of the network \(f_{\theta}\) and the data distribution \(P\), 
aiming to minimize the prediction accuracy of \(f_{\theta}\) under an admissible attack. Complementing the W-DRO training formulation, \citet{SJ17,SND18} proposed \emph{Wasserstein distributional threat models} under which an attack is admissible if the resulting attacked distribution \(Q\) stays in the \(p\)-Wasserstein ball \(B_{\delta}(P)\), where
\(\delta\) is the attack budget, i.e., the tolerance for distributional image distortion.
    We define the adversarial accuracy as:
    \begin{equation}
    \label{eq-aa}
        A_{\delta}:=\inf_{Q\in B_{\delta}(P)}Q(S)=\inf_{Q\in B_{\delta}(P)} \E_{ Q}[\1_{S}].
    \end{equation}
Note that $A_\delta$ is decreasing in $\delta$ with $A_0=A$, the clean accuracy. For $p=\infty$, the Wasserstein distance essentially degenerates to the uniform distance between images and hence the proposed threat model coincides with the popular \emph{pointwise} threat model. For $1\leq p<\infty$, the  \emph{distributional} threat model is strictly stronger than the \emph{pointwise} one, as observed in \citet[Prop.~3.1]{SJ17}. Intuitively, it is because the attacker has a greater flexibility and can pertrub images close to the decision boundary only slightly while spending more of the attack budget on images farther away from the boundary. The threat is also closely related to out-of-distribution generalization, see \citet{SLH+21} for a survey.

\paragraph{WD-Adversarial Attack.}
We propose \emph{Wasserstein distributionally adversarial attack} methods. 
We believe this is a novel contribution and, so far, even the papers which used distributional threat models to motivate DRO-based training methods then used classical pointwise attacks to evaluate robustness of their trained DNNs. Our contribution is possible thanks to the explicit first-order expression for the distributional attack in Theorem \ref{thm-u}(ii).

We recall the Difference of Logits Ratio (DLR) loss of \citet{CH20}. 
If we write \(z=(z_{1},\dots,z_{m})=f_{\theta}(x)\) for the output of a neural network, and \(z_{(1)}\geq\dots\geq z_{(m)}\) are the order statistics of \(z\), then DLR loss is given by
DLR loss is given by
\begin{equation*}
   \mathrm{DLR}(z,y)=\left\{\begin{aligned}
       -\frac{z_{y}-z_{(2)}}{z_{(1)}-z_{(3)}}, & \quad\text{if } z_{y}=z_{(1)}, \\
       -\frac{z_{y}-z_{(1)}}{z_{(1)}-z_{(3)}}, & \quad\text{else.}
   \end{aligned}\right.
\end{equation*}
The combination of CE loss and DLR loss has been widely shown as an effective empirical attack for \emph{pointwise} threat models. However, under \emph{distributional} threat models, intuitively, an effective attack should perturb more aggressively images classified far from the decision boundary and leave the misclassified images unchanged. Consequently, neither CE loss nor DLR loss are appropriate --- this intuition is confirmed in our numerical experiments, see Table \ref{tab-average} for details. To rectify this, we propose ReDLR (Rectified DLR) loss: 
\begin{equation}
\label{eqn-redlr}
    \mathrm{ReDLR}(z,y)=-(\mathrm{DLR})^{-}(z,y)=\left\{\begin{aligned}
        -\frac{z_{y}-z_{(2)}}{z_{(1)}-z_{(3)}}, & \quad\text{if } z_{y}=z_{(1)}, \\
        0,                                                  & \quad\text{else.}
    \end{aligned}\right.
\end{equation}
Its key property is to leave unaffected those images that are already misclassified. Our experiments show it performs superior to CE or DLR.

An attack is performed using the test data set. For a given loss function, our proposed attack is:
\begin{equation}
    \label{eqn-pgd}
    x^{t+1}=\proj_{\delta}\bigl(x^{t}+\alpha h(\nabla_{x}J_{\theta}(x^{t},y))\|\ttUpsilon^{-1}\nabla_{x}J_{\theta}(x^{t},y)\|_{*}^{q-1}\bigr),
\end{equation}
where \(\alpha\) is the step size and \(\proj_{\delta}\) is a projection which ensures the empirical measure \(\ttP^{t+1}:=\frac{1}{|\cD_{tt}|}\sum_{(x,y)\in \cD_{tt}}\delta_{(x^{t+1},y)}\) stays inside the Wasserstein ball $B_\delta(\ttP)$.
In the case \(p=r=\infty\),
one can verify \(h(x)=\sgn(x)\) and write \eqref{eqn-pgd} as 
\begin{equation*}
    x^{t+1}=\proj_{\delta}\bigl(x^{t}+\alpha \sgn(\nabla_{x}J_{\theta}(x^{t},y))\bigr).
\end{equation*}
This gives exactly Fast Gradient Sign Method (single step) and Projected Gradient Descent (multi-step) proposed in \citet{GSS15,MMS+18} and we adopt the same labels for our more general algorithms.\footnote{To stress the Wasserstein attack and the particular loss function we may write, e.g., \emph{W-PGD-ReDLR}.}
A pseudocode for the above attack is summarized in Appendix \ref{ap-aa}.

Finally, note that Theorem \ref{thm-u} offers computationally tractable approximations to the \emph{W-DRO adversarial training} objectives \eqref{eqn-advl} and \eqref{eqn-advl-gen}. In Appendix \ref{ap-at} we propose two possible training methods but do not evaluate their performance and otherwise leave this topic to future research.

\section{Performance Bounds}
Understanding how a DNN classifier will perform outside of the training data set is of key importance. We leverage the DRO sensitivity results now to obtain a lower bound on $A_\delta$. We then use results on convergence of empirical measures in \citet{FG15} to translate our lower bound into guarantees on out-of-sample performance. 

\paragraph{Bounds on Adversarial Accuracy.}
We propose the following metric of robustness:
\[\cR_\delta:= \frac{A_{\delta}}{A}\in[0,1].\] 
Previous works mostly focus on the maximum distortion a neural network can withhold to retain certain adversarial performance, see \citet{HA17,WZC+18} for local robustness and \citet{BIL+16} for global robustness.
However, there is no immediate connection between such a maximum distortion and the adversarial accuracy, especially in face of a distributionally adversarial attack. 
In contrast, since $A=A_0$ is known, computing $\cR_\delta$ is equivalent to computing $A_\delta$. We choose to focus on the relative loss of accuracy as it provides a convenient normalization: $0\leq \cR_\delta\leq 1$. 
$\cR_\delta=1$ corresponds to a very robust architecture which performs as well under attacks as it does on clean test data, while $\cR_\delta=0$ corresponds to an architecture which loses all of its predictive power under an adversarial attack. 
Together the couple $(A,A_\delta)$ thus summarizes the performance of a given classifier. However, computing \(A_{\delta}\) is difficult and time-consuming. Below, we develop a simple and efficient method to calculate theoretical guaranteed bounds on \(\cR\) and thus also on $A_\delta$.

\begin{asmp}
    \label{asmp-q}
    We assume that for any \(Q\in B_{\delta}(P)\)
    \begin{enumerate}[(i)]
        \item \(        0<Q(S)<1.        \)
        \item \(
              \cW_{p}(Q(\cdot|S),P(\cdot|S))+\cW_{p}(Q(\cdot|S^{c}),P(\cdot|S^{c}))= o(\delta),
              \)
              where the conditional distribution is given by $Q(E|S)=Q(E\cap S)/Q(S)$.
    \end{enumerate}
\end{asmp}
The first condition stipulates non-degeneracy: the classifier does not perform perfectly but retains some accuracy under attacks. 
The second condition says the classes are well-separated: for $\delta$ small enough an admissible attack can rarely succeed.

We write the adversarial loss condition on the correctly classified images and misclassified images as
  \begin{equation*}
        C(\delta)=\sup_{Q\in B_{\delta}(P)}\E_{Q}[J_{\theta}(x,y)|S] \quad \text{and}\quad W(\delta)=\sup_{Q\in B_{\delta}(P)}\E_{Q}[J_{\theta}(x,y)|S^{c}].
    \end{equation*}
We note that an upper bound on $\cR_\delta$ is given by any adversarial attack. In particular, 
\begin{equation}\label{ru}\cR_\delta \leq \cR^u_\delta:= Q_\delta(S)/A.\end{equation}
\begin{thm}
    \label{thm-metric}
    Under Assumptions \ref{asmp-j} and \ref{asmp-q}, we have an asymptotic lower bound as \(
\delta\to 0\)
    \begin{equation}\label{eq:thmlowerbound}
        \cR_\delta\geq \frac{W(0)-V(\delta)}{W(0)-V(0)} +o(\delta)=\widetilde\cR_\delta^l+o(\delta)=\overline\cR^l_\delta+o(\delta),
    \end{equation}
    where the first order approximations are given by
    \begin{equation}
    \label{rl}
        \widetilde\cR_\delta^l=\frac{W(0)-\E_{Q_{\delta}}[J_{\theta}(x,y)]}{W(0)-V(0)}\quad \text{and} \quad{\overline\cR}_{\delta}^l=\frac{W(0)-V(0)-\delta\Upsilon}{W(0)-V(0)}.        
    \end{equation}
    \end{thm}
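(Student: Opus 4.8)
The plan is to bound $A_\delta = \inf_{Q\in B_\delta(P)} Q(S)$ from below by controlling $Q(S)$ for each admissible $Q$ and then passing to the infimum; the key point is that $\1_S$ is not Lipschitz, so Theorem~\ref{thm-u} cannot be applied to it directly, but we can transfer information from the (Lipschitz) loss $J_\theta$ to $Q(S)$ through a conditional decomposition. Fix $\delta>0$ small and $Q\in B_\delta(P)$; by Assumption~\ref{asmp-q}\,(i) the number $a:=Q(S)$ lies in $(0,1)$ and the conditional laws are well defined, and
\[
  \E_Q[J_\theta(x,y)] = a\,\E_Q[J_\theta(x,y)\mid S] + (1-a)\,\E_Q[J_\theta(x,y)\mid S^c].
\]
The crucial reduction is to replace the conditional expectations under $Q$ by $C(0)=\E_P[J_\theta\mid S]$ and $W(0)=\E_P[J_\theta\mid S^c]$ (equalities that hold because $B_0(P)=\{P\}$) up to an $o(\delta)$ error. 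Here Assumptions~\ref{asmp-j} and~\ref{asmp-q}\,(ii) combine: since $J_\theta$ is $L$-Lipschitz with respect to $d$, Kantorovich--Rubinstein duality together with $\W_1\le \W_p$ yields $|\E_Q[J_\theta\mid S]-\E_P[J_\theta\mid S]|\le L\,\W_p(Q(\cdot\mid S),P(\cdot\mid S))$ and likewise on $S^c$, while Assumption~\ref{asmp-q}\,(ii) makes the sum of these two Wasserstein distances $o(\delta)$. Absorbing the weights using $a\in[0,1]$, this gives $\E_Q[J_\theta] = a\,C(0)+(1-a)\,W(0)+o(\delta)$.

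Next I would feed in $\E_Q[J_\theta]\le V(\delta)$, valid since $Q\in B_\delta(P)$, and solve for $a$: rearranging gives $a\,(W(0)-C(0))\ge W(0)-V(\delta)+o(\delta)$, so, assuming $W(0)-C(0)>0$ (a mild non-degeneracy: on average correctly classified images carry strictly smaller loss; equivalently $W(0)-V(0)>0$, the denominator appearing in the statement), division gives $Q(S)\ge (W(0)-V(\delta))/(W(0)-C(0))+o(\delta)$. Passing to the infimum over $Q$---more carefully, applying the last line to a near-minimizing family $Q^{(\delta)}$ with $Q^{(\delta)}(S)\le A_\delta+o(\delta)$, to which Assumption~\ref{asmp-q}\,(ii) still applies---yields $A_\delta\ge (W(0)-V(\delta))/(W(0)-C(0))+o(\delta)$. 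Dividing by $A$ and using the identity $W(0)-V(0)=A\,(W(0)-C(0))$, immediate from $V(0)=A\,C(0)+(1-A)\,W(0)$, produces the main bound $\cR_\delta = A_\delta/A\ge (W(0)-V(\delta))/(W(0)-V(0))+o(\delta)$.

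To get the two explicit first-order forms I would simply substitute the expansions of Theorem~\ref{thm-u}: part~(i) gives $V(\delta)=V(0)+\delta\Upsilon+o(\delta)$, whence $(W(0)-V(\delta))/(W(0)-V(0)) = \overline\cR_\delta^l+o(\delta)$, and part~(ii) gives $V(\delta)=\E_{Q_\delta}[J_\theta]+o(\delta)$, whence the same ratio equals $\widetilde\cR_\delta^l+o(\delta)$. The one genuinely delicate step---the rest being algebra---is the interchange of the infimum over $B_\delta(P)$ with the $o(\delta)$ asymptotics in the conditional-loss estimate, whose error a priori depends on $Q$; this is handled either by reading Assumption~\ref{asmp-q}\,(ii) as uniform over the ball or by running the argument along a near-minimizing sequence for $A_\delta$ as above. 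A secondary point worth making explicit is the strict positivity of $W(0)-V(0)$, without which the displayed ratio is not a meaningful bound for a quantity valued in $[0,1]$.
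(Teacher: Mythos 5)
Your proof is correct, but it takes a genuinely different route from the paper's. The paper first establishes the tower-like identity of Proposition \ref{prop-tower}, $V(\delta)=\sup_{Q\in B_{\delta}(P)}\E_{Q}[C(\delta)\1_{S}+W(\delta)\1_{S^{c}}]+o(\delta)$, whose nontrivial direction requires gluing near-optimizers $Q_c,Q_w$ of the conditional problems with a near-optimizer $Q_\star$ of the mixed objective, using Assumption \ref{asmp-q}(ii) to show the glued measure sits in a ball of radius $\delta+o(\delta)$, and then invoking the Lipschitz sensitivity bound of \citet{BDOW21} (Corollary 7.5); the theorem then follows by writing $V(\delta)=W(\delta)-(W(\delta)-C(\delta))A_\delta+o(\delta)$, comparing with $V(0)=W(0)-(W(0)-C(0))A$, and discarding the nonnegative increments $W(\delta)-W(0)$ and $C(\delta)-C(0)$. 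You instead work directly with an (almost) arbitrary $Q\in B_\delta(P)$: you decompose $\E_Q[J_\theta]$ conditionally on $S$, use Lipschitzness plus the easy direction of Kantorovich--Rubinstein and $\W_1\le\W_p$ to replace $\E_Q[J_\theta\mid S]$, $\E_Q[J_\theta\mid S^c]$ by $C(0)$, $W(0)$ up to the $o(\delta)$ of Assumption \ref{asmp-q}(ii), bound by $V(\delta)$, and solve for $Q(S)$ before passing to (near-)minimizers; the quantities $C(\delta),W(\delta)$ and the gluing construction never appear. What the paper's route buys is the tower-like property itself, stated as a standalone proposition linking $V(\delta)$ to $A_\delta$ and the conditional adversarial losses at level $\delta$; what your route buys is brevity and economy of tools (no mixed measure, no second appeal to \citet{BDOW21} beyond the final first-order expansions of Theorem \ref{thm-u}, which you use exactly as the paper does to identify $\widetilde\cR^l_\delta$ and $\overline\cR^l_\delta$). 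Your two explicit caveats --- that the $o(\delta)$ in Assumption \ref{asmp-q}(ii) must be read uniformly over the ball (or the argument run along a $\delta$-indexed near-minimizing family), and that $W(0)-C(0)>0$, equivalently $W(0)-V(0)>0$, is needed to divide --- are both points the paper's own proof also relies on implicitly, so they are features rather than gaps.
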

 The equality between the lower bound and the two first-order approximations $\widetilde\cR_\delta^l$ and $\overline\cR_\delta^l$ follows from Theorem \ref{thm-u}. 
 Consequently, $\cR^l_\delta:= \min\{\widetilde\cR_\delta^l,\overline\cR_\delta^l\}$ allows us to estimate the model robustness without performing any sophisticated adversarial attack. Our experiments, detailed below, show the bound is reliable for small $\delta$ and is orders of magnitude faster to compute than $\cR_\delta$ even in the classical case of pointwise attacks.  
The proof is reported in  Appendix \ref{ap-baa}. Its key ingredient is the following tower-like property.
\begin{prop}
    \label{prop-tower}
    Under Assumptions \ref{asmp-j} and \ref{asmp-q}, we have
    \begin{align*}
        V(\delta)=\sup_{Q\in B_{\delta}(P)}\E_{Q}[C(\delta)\1_{S}+W(\delta)\1_{S^{c}}]+o(\delta).
    \end{align*}
\end{prop}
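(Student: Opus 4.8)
The plan is to prove the sandwich $V(\delta)\le R(\delta)\le V(\delta)+o(\delta)$, where we write $R(\delta):=\sup_{Q\in B_\delta(P)}\E_Q[C(\delta)\1_S+W(\delta)\1_{S^c}]$; this yields $V(\delta)=R(\delta)+o(\delta)$, i.e.\ the claim. The lower bound on $R(\delta)$ is immediate and needs no remainder. Fix $Q\in B_\delta(P)$; by Assumption \ref{asmp-q}(i) the conditionals $Q(\cdot\mid S)$, $Q(\cdot\mid S^c)$ are well defined, and conditioning on $S$ together with the definitions of $C(\delta)$, $W(\delta)$ gives
\[
\E_Q[J_\theta]=Q(S)\,\E_Q[J_\theta\mid S]+Q(S^c)\,\E_Q[J_\theta\mid S^c]\le Q(S)\,C(\delta)+Q(S^c)\,W(\delta)=\E_Q\bigl[C(\delta)\1_S+W(\delta)\1_{S^c}\bigr];
\]
taking the supremum over $Q$ gives $V(\delta)\le R(\delta)$.

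For the upper bound I would use a recombination argument. Choose $\bar Q_\delta,\tilde Q_\delta\in B_\delta(P)$ with $\E_{\bar Q_\delta}[J_\theta\mid S]\ge C(\delta)-\delta^2$ and $\E_{\tilde Q_\delta}[J_\theta\mid S^c]\ge W(\delta)-\delta^2$ (possible since these are suprema and, by Assumption \ref{asmp-q}(i), the conditionals are defined). For arbitrary $Q\in B_\delta(P)$ set
\[
Q':=Q(S)\,\bar Q_\delta(\cdot\mid S)+Q(S^c)\,\tilde Q_\delta(\cdot\mid S^c).
\]
Because $\bar Q_\delta(\cdot\mid S)$ is carried by $S$ and $\tilde Q_\delta(\cdot\mid S^c)$ by $S^c$, $Q'$ has the same mass split as $Q$ (so $Q'(S)=Q(S)$) and its conditionals on $S$ and $S^c$ equal those of $\bar Q_\delta$ and $\tilde Q_\delta$; hence
\[
\E_{Q'}[J_\theta]=Q(S)\,\E_{\bar Q_\delta}[J_\theta\mid S]+Q(S^c)\,\E_{\tilde Q_\delta}[J_\theta\mid S^c]\ge Q(S)\,C(\delta)+Q(S^c)\,W(\delta)-\delta^2=\E_Q\bigl[C(\delta)\1_S+W(\delta)\1_{S^c}\bigr]-\delta^2.
\]

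It remains to show $\W_p(Q',P)\le\delta+o(\delta)$. Let $\pi_S$ (resp.\ $\pi_{S^c}$) be an optimal coupling of $\bar Q_\delta(\cdot\mid S)$ and $Q(\cdot\mid S)$ (resp.\ of $\tilde Q_\delta(\cdot\mid S^c)$ and $Q(\cdot\mid S^c)$); these have finite cost since the conditionals involved share the label marginals of $P(\cdot\mid S)$, resp.\ $P(\cdot\mid S^c)$, being at finite $\W_p$-distance from them by Assumption \ref{asmp-q}(ii). Then $\pi:=Q(S)\,\pi_S+Q(S^c)\,\pi_{S^c}$ couples $Q'$ and $Q$, whence
\[
\W_p(Q',Q)^p\le Q(S)\,\W_p\bigl(\bar Q_\delta(\cdot\mid S),Q(\cdot\mid S)\bigr)^p+Q(S^c)\,\W_p\bigl(\tilde Q_\delta(\cdot\mid S^c),Q(\cdot\mid S^c)\bigr)^p.
\]
Bounding each $\W_p$ by the triangle inequality through $P(\cdot\mid S)$, resp.\ $P(\cdot\mid S^c)$, and applying Assumption \ref{asmp-q}(ii) to each of $Q$, $\bar Q_\delta$, $\tilde Q_\delta$ (all in $B_\delta(P)$) makes the right-hand side $o(\delta^p)$ uniformly in $Q$, so $\W_p(Q',Q)=o(\delta)$ and $\W_p(Q',P)\le\delta+o(\delta)$. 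Consequently $\E_{Q'}[J_\theta]\le V\bigl(\W_p(Q',P)\bigr)\le V(\delta)+o(\delta)$, using that $\delta\mapsto V(\delta)$ is $L$-Lipschitz under Assumption \ref{asmp-j} (or, directly, that Theorem \ref{thm-u}(i) gives $V(\delta+o(\delta))=V(\delta)+o(\delta)$). Chaining with the previous display, $\E_Q[C(\delta)\1_S+W(\delta)\1_{S^c}]\le V(\delta)+o(\delta)$ with remainder independent of $Q$; taking the supremum over $Q\in B_\delta(P)$ gives $R(\delta)\le V(\delta)+o(\delta)$, and the sandwich is complete. The step I expect to be the main obstacle is exactly this last one: guaranteeing that the recombined measure $Q'$ stays within radius $\delta+o(\delta)$ of $P$ with a remainder that is genuinely uniform over $Q\in B_\delta(P)$ and independent of the near-optimality level $\delta^2$. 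This is precisely where one needs to read Assumption \ref{asmp-q}(ii) as the uniform statement $\sup_{Q\in B_\delta(P)}\bigl(\W_p(Q(\cdot\mid S),P(\cdot\mid S))+\W_p(Q(\cdot\mid S^c),P(\cdot\mid S^c))\bigr)=o(\delta)$, together with the Lipschitz regularity of $V$; the bookkeeping of label marginals needed to keep the intermediate Wasserstein distances finite is a minor but necessary point.
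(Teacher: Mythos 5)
Your proof is correct and follows essentially the same route as the paper's: the tower property gives $V(\delta)\leq R(\delta)$, and the reverse inequality is obtained by recombining near-optimal conditional measures on $S$ and $S^c$, bounding the Wasserstein distance of the recombined measure via Assumption \ref{asmp-q}(ii) and the triangle inequality, and invoking the Lipschitz regularity of the value $V$. The only cosmetic differences are that you recombine around an arbitrary $Q\in B_\delta(P)$ with $\delta^2$ slack rather than around a near-maximizer with $\varepsilon\to 0$, and your reading of Assumption \ref{asmp-q}(ii) as uniform over the ball matches how the paper itself uses it.
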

\paragraph{Bounds on Out-of-Sample Performance.} Our results on distributionally adversarial robustness translate into bounds for performance of the trained DNN on unseen data. 
We rely on the results of \citet{FG15} and refer to \citet{LR18} for analogous applications to finite sample guarantees and to \citet{Gao22a} for further results and discussion.

We fix $1<p<n/2$ and let $N=|\cD_{tr}|$, $M=|\cD_{tt}|$. If sampling of data from $P$ is described on a probability space \((\Omega,\cF,\bbP)\) then $\trP$ is a random measure on this space and, by ergodic theorem, $\bbP$-a.s., it converges weakly to $P$ as $N\to\infty$. In fact,  
$\cW_{p}(\trP,P)$ converges to zero $\bbP$-a.s. Crucially the rates of convergence were obtained in \citet{DSS13,FG15} and yield 
\begin{equation}\label{eq:FG}
    \bbE[\cW_{p}(\trP,P)]\leq K N^{-\frac{1}{n}}\quad \text{and}\quad \bbP(\cW_{p}(\trP,P)\geq \varepsilon)\leq K \exp(-KN\varepsilon^{n}),
\end{equation}
where $K$ is a constant depending on $p$ and $n$ which can be computed explicitly, see for example \citet[Appendix]{GO20}. This, with triangle inequality and Theorem \ref{thm-metric}, gives
\begin{cor}
\label{cor-oofs}
    Under Assumptions \ref{asmp-j} and \ref{asmp-q} on measure $\trP$, with probability at least \(1-2K \exp(-K\varepsilon^{n}\min\{M,N\})\) it holds that
    \begin{equation*}
        \widecheck{A}=\ttP(S)\geq \widehat{A}\widehat{\cR}_{2\varepsilon}^l+o(\varepsilon).
    \end{equation*}
\end{cor}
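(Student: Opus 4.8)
The plan is to combine the finite-sample concentration of empirical measures in Wasserstein distance, \eqref{eq:FG}, with the deterministic lower bound of Theorem \ref{thm-metric} applied to the (random) reference measure $\trP$ in place of $P$. Since $1<p<n/2$ and both $\trP$ and $\ttP$ are empirical measures of i.i.d.\ draws from $P$, I would first invoke the deviation inequality in \eqref{eq:FG} twice, once for the training sample and once for the test sample, to get $\bbP(\cW_{p}(\trP,P)\geq\varepsilon)\leq K\exp(-KN\varepsilon^{n})$ and $\bbP(\cW_{p}(\ttP,P)\geq\varepsilon)\leq K\exp(-KM\varepsilon^{n})$. A union bound (no independence of the two samples is needed) then shows that the event $E:=\{\cW_{p}(\trP,P)<\varepsilon\}\cap\{\cW_{p}(\ttP,P)<\varepsilon\}$ satisfies $\bbP(E)\geq 1-K\exp(-KN\varepsilon^{n})-K\exp(-KM\varepsilon^{n})\geq 1-2K\exp(-K\varepsilon^{n}\min\{M,N\})$, which is exactly the probability appearing in the statement.

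Next, on $E$ the triangle inequality for $\cW_{p}$ gives $\cW_{p}(\ttP,\trP)\leq\cW_{p}(\ttP,P)+\cW_{p}(P,\trP)<2\varepsilon$, i.e.\ $\ttP\in B_{2\varepsilon}(\trP)$. Consequently, taking $\trP$ as the reference measure and $\delta=2\varepsilon$ in the definition \eqref{eq-aa} of adversarial accuracy, we obtain $\widecheck{A}=\ttP(S)\geq\inf_{Q\in B_{2\varepsilon}(\trP)}Q(S)=\widehat{A}\,\widehat{\cR}_{2\varepsilon}$, where $\widehat{\cR}_{2\varepsilon}$ denotes the robustness ratio $A_{2\varepsilon}/A$ computed with $\trP$. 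By the remark following Theorem \ref{thm-u} and the hypothesis that Assumptions \ref{asmp-j} and \ref{asmp-q} hold on $\trP$, Theorem \ref{thm-metric} applies with reference measure $\trP$ and budget $2\varepsilon$, yielding $\widehat{\cR}_{2\varepsilon}\geq\widehat{\cR}^{l}_{2\varepsilon}+o(\varepsilon)$ (here $\widehat{\cR}^{l}_{\delta}$, $\widehat{\Upsilon}$, $W(0)$, $V(0)$ etc.\ are the quantities of \eqref{rl} built from $\trP$, and $o(2\varepsilon)=o(\varepsilon)$). Multiplying by $\widehat{A}\in(0,1)$, which absorbs the remainder since $\widehat{A}\cdot o(\varepsilon)=o(\varepsilon)$, gives $\widecheck{A}\geq\widehat{A}\,\widehat{\cR}^{l}_{2\varepsilon}+o(\varepsilon)$ on $E$, i.e.\ with probability at least $1-2K\exp(-K\varepsilon^{n}\min\{M,N\})$, which is the assertion.

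The argument is a packaging of already-established ingredients, so there is no single hard step; the only points requiring care are (a) the bookkeeping in the union bound, so that the two exponential terms collapse to $2K\exp(-K\varepsilon^{n}\min\{M,N\})$ using $N,M\geq\min\{M,N\}$, and (b) the precise meaning of the $o(\varepsilon)$ term, which is to be read realization-wise: for each $\omega\in E$, Theorem \ref{thm-metric} is applied to the deterministic measure $\trP(\omega)$, so the remainder may a priori depend on $\omega$ but is $o(\varepsilon)$ as $\varepsilon\to0$. One should also note that the non-degeneracy part of Assumption \ref{asmp-q}(i) (imposed on $\trP$) guarantees $0<\widehat{A}<1$, so that $\widehat{\cR}^{l}_{2\varepsilon}$ is well defined and the division above is legitimate.
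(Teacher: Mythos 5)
Your proposal is correct and follows essentially the same route as the paper's proof: triangle inequality plus the concentration bound applied to both $\trP$ and $\ttP$ with a union bound to get $\cW_p(\ttP,\trP)<2\varepsilon$ with the stated probability, then $\ttP\in B_{2\varepsilon}(\trP)$ gives $\widecheck{A}\geq \widehat{A}_{2\varepsilon}$, and Theorem \ref{thm-metric} applied with reference measure $\trP$ and budget $2\varepsilon$ yields the bound. Your additional remarks on the realization-wise reading of the $o(\varepsilon)$ term and the role of Assumption \ref{asmp-q}(i) for $\trP$ are sensible clarifications of points the paper leaves implicit.
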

Next results provide a finer statistical guarantee on the out-of-sample performance for robust (W-DRO) training. Its proof is reported in Appendix \ref{ap-oos}.
\begin{thm}\label{thm:ofs}
    Under Assumption \ref{asmp-j}, with probability at least \(1-K \exp(-KN\varepsilon^{n})\) we have
    \begin{equation*}
        V(\delta)\leq \widehat{V}(\delta) + \varepsilon\sup_{Q\in B_{\delta}^{\star}(\trP)}\Bigl(\E_{Q}\|\nabla_{x}J_{\theta}(x,y)\|_{s}^{q}\Bigr)^{1/q} + o(\varepsilon)\leq \widehat{V}(\delta)+ L\varepsilon
    \end{equation*}
    where   \(B_{\delta}^{\star}(\trP)=\arg\max_{Q\in B_{\delta}(\trP)}\E_{Q}[J_{\theta}(x,y)]\) and constant \(K\) only depends on $p$ and $n$.
\end{thm}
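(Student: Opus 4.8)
The plan is to transport the ball $B_\delta(P)$ into a slightly enlarged ball around $\trP$ on a high-probability event, and then expand the empirical value function $\widehat V$ in its radius. First I would record a consequence of Assumption~\ref{asmp-j}: read on pairs $(x_1,y),(x_2,y)$ sharing a label, where $d((x_1,y),(x_2,y))=\|x_1-x_2\|$, it says $x\mapsto J_\theta(x,y)$ is $L$-Lipschitz w.r.t.\ $\|\cdot\|$, so (Rademacher) $\|\nabla_xJ_\theta(x,y)\|_s=\|\nabla_xJ_\theta(x,y)\|_*\le L$ a.e., whence $\bigl(\E_Q\|\nabla_xJ_\theta(x,y)\|_s^q\bigr)^{1/q}\le L$ for every $Q$; this delivers the last inequality once the middle one is in hand. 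Next, by the Fournier--Guillin estimate, the right-hand bound in \eqref{eq:FG}, the event $\{\cW_p(\trP,P)\le\varepsilon\}$ has probability at least $1-K\exp(-KN\varepsilon^n)$, and I would prove the asserted inequality on this event.

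On that event the triangle inequality for $\cW_p$ gives $B_\delta(P)\subseteq B_{\delta+\varepsilon}(\trP)$, since any $Q$ with $\cW_p(Q,P)\le\delta$ satisfies $\cW_p(Q,\trP)\le\cW_p(Q,P)+\cW_p(P,\trP)\le\delta+\varepsilon$; taking suprema, $V(\delta)\le\widehat V(\delta+\varepsilon)$. The core step is then a first-order upper expansion of $\widehat V$ in the radius around $\delta$, namely $\widehat V(\delta+\varepsilon)\le\widehat V(\delta)+\varepsilon\sup_{Q\in B_\delta^\star(\trP)}\bigl(\E_Q\|\nabla_xJ_\theta(x,y)\|_s^q\bigr)^{1/q}+o(\varepsilon)$, which I would obtain from the general-radius form of the Wasserstein-DRO sensitivity result of \citet[Theorem~2.2]{BDOW21} — Theorem~\ref{thm-u}(i) being its $\delta=0$ instance — applied with reference measure $\trP$ and base radius $\delta$, so that the one-sided derivative is the supremum of the gradient term over the maximizer set $B_\delta^\star(\trP)$. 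Chaining the two displays gives the middle inequality. For the clean remainder-free bound $\widehat V(\delta+\varepsilon)\le\widehat V(\delta)+L\varepsilon$ I would argue directly rather than through the sensitivity theorem: given $Q\in B_{\delta+\varepsilon}(\trP)$, take a constant-speed $\cW_p$-geodesic from $\trP$ to $Q$ (it exists because $p>1$ and, along admissible transports, the label is frozen while $(\cX,\|\cdot\|)$ is a geodesic space), stop it at time $t^\star=\min\{1,\delta/\cW_p(Q,\trP)\}$ to get $Q'\in B_\delta(\trP)$ with $\cW_1(Q,Q')\le\cW_p(Q,Q')\le\varepsilon$, and use $L$-Lipschitzness of $J_\theta$ together with $\cW_1\le\cW_p$ to conclude $\E_Q[J_\theta]\le\E_{Q'}[J_\theta]+L\varepsilon\le\widehat V(\delta)+L\varepsilon$; a supremum over $Q$ and combination with $V(\delta)\le\widehat V(\delta+\varepsilon)$ then finishes the proof.

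\textbf{Main obstacle.} The delicate point is the first-order expansion of $\widehat V$: Theorem~\ref{thm-u} is stated only at $\delta=0$, so one must invoke the general-radius version of \citet{BDOW21} and check its hypotheses for $J_\theta=L\circ f_\theta$ at base radius $\delta$ — in particular the integrability/differentiability conditions, which require the same care about piecewise-smoothness (ReLU architectures, CE or ReDLR loss) as Theorem~\ref{thm-u}, and the identification of the right derivative with a supremum over the entire maximizer set $B_\delta^\star(\trP)$ rather than a single optimal transport plan. The rest is routine: the probabilistic input is a black-box application of \eqref{eq:FG}, and the inclusion $B_\delta(P)\subseteq B_{\delta+\varepsilon}(\trP)$ and the geodesic interpolation are elementary facts about $\cW_p$.
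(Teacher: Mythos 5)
Your proposal is correct and follows essentially the same route as the paper: the concentration bound of \citet{FG15} gives the high-probability event, the triangle inequality yields $V(\delta)\leq \widehat{V}(\delta+\varepsilon)$, and the general-radius sensitivity result of \citet{BDOW21} (their Corollary 7.5), applied with reference measure $\trP$ at base radius $\delta$, provides the first-order expansion, with $\|\nabla_x J_\theta\|_*\leq L$ giving the final bound. The only differences are cosmetic: your direct geodesic-interpolation argument for $\widehat{V}(\delta+\varepsilon)\leq\widehat{V}(\delta)+L\varepsilon$ is a clean substitute for the paper's appeal to Lipschitzness inside the expansion, and your inclusion $B_\delta(P)\subseteq B_{\delta+\varepsilon}(\trP)$ is the correct direction needed for the conclusion (the paper's proof states it the other way round, evidently a slip).
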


Our lower bound estimate in Theorem \ref{thm-metric} can be restated as
\begin{equation*}
    \Delta \widehat{A}_{\delta}:= \widehat{A}-\widehat{A}_\delta\leq  \frac{\widehat{V}(\delta)-\widehat{V}(0)}{\widehat{W}(0)-\widehat{C}(0)} +o(\delta).
\end{equation*}
We now use Theorem \ref{thm:ofs} to bound $\Delta A(\delta)$, the shortfall of the adversarial accuracy under $P$, using quantities evaluated under $\trP$.
\begin{cor}\label{cor:aaboundempirical}
    Under Assumptions \ref{asmp-j} and \ref{asmp-q}, with probability at least \(1-K \exp(-KN\delta^{n})\) it holds that
    \begin{equation*}
        \Delta A_{\delta}(P)\leq \frac{\widehat{V}(\delta)-\widehat{V}(0)}{\widehat{W}(0)-\widehat{C}(0)}+\frac{2L\delta}{\widehat{W}(0)-\widehat{C}(0)} +o(\delta).
    \end{equation*}
\end{cor}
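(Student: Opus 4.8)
The plan is to combine three ingredients already established: (a) the restated form of Theorem \ref{thm-metric}, written as an upper bound on $\Delta A_\delta$ in terms of quantities under the true measure $P$, namely
\[
\Delta A_\delta(P)\leq \frac{V(\delta)-V(0)}{W(0)-C(0)}+o(\delta);
\]
(b) Theorem \ref{thm:ofs}, which on an event of probability at least $1-K\exp(-KN\varepsilon^n)$ gives $V(\delta)\leq \widehat V(\delta)+L\varepsilon$; and (c) the analogous control, via the same Fougères--Guillin concentration \eqref{eq:FG} and the Lipschitz Assumption \ref{asmp-j}, of the remaining $P$-quantities $V(0)$, $W(0)$, $C(0)$ by their empirical counterparts $\widehat V(0)$, $\widehat W(0)$, $\widehat C(0)$. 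I would take $\varepsilon=\delta$ throughout so that the high-probability event in Theorem \ref{thm:ofs} becomes the event of probability at least $1-K\exp(-KN\delta^n)$ appearing in the statement.

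First I would write $V(0)$, $W(0)$, $C(0)$ as worst-case conditional expectations of the $L$-Lipschitz map $J_\theta$ over Wasserstein balls, and observe that on the event $\{\cW_p(\widehat P,P)\leq\delta\}$ one has, by the triangle inequality for $\cW_p$ and the fact that an $L$-Lipschitz function changes the value of $\E_Q[J_\theta]$ by at most $L\,\cW_p$, the two-sided bounds $|V(0)-\widehat V(0)|\leq L\delta$, and similarly for the conditional versions $C(0)$, $W(0)$ (here Assumption \ref{asmp-q}(ii) is what lets the conditioning on $S$ and $S^c$ pass to the limit without extra error, so that the denominator $W(0)-C(0)$ is, up to $o(\delta)$, the same whether computed under $P$ or $\widehat P$). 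Then on this event the numerator satisfies $V(\delta)-V(0)\leq \widehat V(\delta)+L\delta-(\widehat V(0)-L\delta)=\widehat V(\delta)-\widehat V(0)+2L\delta$, while the denominator $W(0)-C(0)$ equals $\widehat W(0)-\widehat C(0)+o(\delta)$ and in particular is bounded below away from zero for small $\delta$ by Assumption \ref{asmp-q}(i), which is exactly what makes the division legitimate and lets the $L\delta$ denominator-perturbations be absorbed into the $o(\delta)$ term.

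Putting (a), (b), (c) together: on the event $\cW_p(\widehat P,P)\leq\delta$ — which by \eqref{eq:FG} has probability at least $1-K\exp(-KN\delta^n)$ — we get
\[
\Delta A_\delta(P)\leq \frac{V(\delta)-V(0)}{W(0)-C(0)}+o(\delta)\leq\frac{\widehat V(\delta)-\widehat V(0)+2L\delta}{\widehat W(0)-\widehat C(0)}+o(\delta),
\]
which is the claimed inequality. The main obstacle, and the step deserving the most care, is the denominator: one must argue that $\widehat W(0)-\widehat C(0)$ is bounded below by a positive constant uniformly for small $\delta$ (so that perturbing it by $O(L\delta)$ only contributes $O(\delta)$, hence $o(1)$ relative to the bound, and can be folded into $o(\delta)$ after multiplying through), and that Assumption \ref{asmp-q}(ii) genuinely transfers from $P$ to $\widehat P$ on the good event — i.e. that replacing $P$ by $\widehat P$ in the "well-separated classes" condition costs only $o(\delta)$. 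Both are routine given the Lipschitz assumption and \eqref{eq:FG}, but they are where the $o(\delta)$ bookkeeping has to be done honestly rather than hand-waved.
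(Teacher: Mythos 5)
Your proposal follows the paper's proof essentially step for step: invoke the restated Theorem \ref{thm-metric}, bound the numerator by taking $\varepsilon=\delta$ in Theorem \ref{thm:ofs} together with the symmetric estimate $V(0)\geq \widehat{V}(0)-L\delta$ to produce the $2L\delta$ term, and control the denominator by applying Assumption \ref{asmp-q}(ii) to $\trP\in B_{\delta}(P)$ on the high-probability event so that $|W(0)-C(0)-\widehat{W}(0)+\widehat{C}(0)|=o(\delta)$. This is the same decomposition and the same use of the concentration inequality as in the paper, so the proposal is correct as it stands.
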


We remark that the above results are easily extended to the out-of-sample performance on the test set, via the triangle inequality \(\cW_{p}(\trP,\ttP)\leq \cW_{p}(\trP,P)+ \cW_{p}(P,\ttP)\). 
By using complexity measures such as entropy integral \citep{LR18}, Rademacher complexity \citep{Gao22a,Gao22b} a further analysis can be undertaken for 
\begin{equation}\label{eqn:DROtesttrue}
    \inf_{\theta\in\Theta}\sup_{Q\in B_{\delta}(P)}\E_{Q}[J_{\theta}(x,y)] \quad \text{and} \quad \inf_{\theta\in\Theta}\sup_{Q\in B_{\delta}(\trP)}\E_{Q}[J_{\theta}(x,y)].
\end{equation}
In particular, a dimension-free estimate of out-of-sample performance is obtained in \citep{Gao22a} under a Lipschitz framework with light-tail reference measures.

\section{Numerical Experiments}
\label{sec-num}
\paragraph{Experimental Setting.}
We conduct experiments on a high performance computing server equipped with 49 GPU nodes. The algorithms are implemented in Python.
All experiments are conducted on CIFAR-10 dataset \citep{Kri09Learning}, comprising 60,000 color images across 10 mutually exclusive classes, with 6,000 images per class.
Each image contains \(32\times32\) pixels in 3 color channels.
We normalize the input feature as a vector \(x\in [0,1]^{3\times 32\times 32}\).
The dataset is further divided into training and test sets, containing 50,000 and 10,000 images respectively.
We evaluate the robustness of neural networks on the test set only.

We consider four threat models \((\cW_{p},l_{r})\) with \(p,r\in\{2,\infty\}\) with different range of attack budget $\delta$ depending on the relative strength of the attack.
E.g., roughly speaking, if an \(l_{\infty}\)-attack modifies one third of the pixels of an image with strength 4/255, then it corresponds to an  \(l_{2}\)-attack with strength 1/2. When clear from the context, we drop the $\delta$ subscript.

We take top neural networks from RobustBench \citep{CAS+21}, a lively maintained repository that records benchmark robust neural networks on CIFAR-10 against \emph{pointwise} attacks.
For \emph{pointwise} threat models \((\cW_{\infty},l_{r})\), RobustBench reports \(A_{\delta}\) obtained using AutoAttack \citep{CH20} 
for $l_{\infty},\delta=8/255$ and $l_{2}, \delta=1/2$, see Appendix \ref{ap-numerical}. However, due to high computational cost of AutoAttack, we apply PGD-50 based on CE and DLR losses as a substitute to obtain the reference adversarial accuracy for attacks with relatively small budgets \(\delta=2/255, 4/255\) for \(l_{\infty}\) and \(\delta=1/8, 1/4\) for \(l_{2}\).
For \emph{distributional} threat models \((\cW_{2},l_{r})\), there is no existing benchmark attacking method.
Therefore, W-PGD attack \eqref{eqn-pgd} based on ReDLR loss is implemented to obtain the reference adversarial accuracy \(A_{\delta}\). All PGD attacks are run with 50 iteration steps and take between 1 and 12 hours to run on a single GPU environment. Bounds $\cR^l,\cR^u$ compute ca.\ 50 times faster.

\begin{table}[htbp]
    \small
            \caption{Comparison of adversarial accuracy of neural networks on RobustBench under different empirical attacks. Set attack budget \(\delta=8/255\) for \(l_\infty\) threat models and \(\delta=1/2\) for \(l_2\) threat models.}
    \label{tab-average}
    \centering\begin{tabular}{ccccc}
        \toprule
                       & \multicolumn{1}{c}{\(\cW_{\infty}\)} & \multicolumn{3}{c}{\(\cW_{2}\)}                              \\
        \cmidrule(lr){2-2} \cmidrule(lr){3-5}
         Methods     & AutoAttack                           & W-PGD-CE                          & W-PGD-DLR & W-PGD-ReDLR        \\
        \midrule
        \(l_{\infty}\) & 57.66\%                              & 61.32\%                         & 79.00\% & \textbf{45.46\%} \\
        \(l_{2}\)      & 75.78\%                              & 74.62\%                         & 78.69\% & \textbf{61.69\%} \\
        \bottomrule
    \end{tabular}
\end{table}
\paragraph{Distributionally Adversarial Attack.}
We report in Table \ref{tab-average} the average accuracy of top neural networks on RobustBench against pointwise and distributional attacks under different loss functions. The predicted drop in accuracy between  a pointwise, i.e., $\infty$-W-DRO attack and a distributional 2-W-DRO attack is only realized using the ReDLR loss. 

In Figure \ref{fig-shortfall}, we compare the adversarial accuracy of robust networks on RobustBench against \emph{pointwise} threat models and \emph{distributional} threat models. 
We notice a significant drop of the adversarial accuracy even for those neural networks robust against \emph{pointwise} threat models.

\begin{figure}[htbp]
    \centering
    \includegraphics[width=0.7\linewidth]{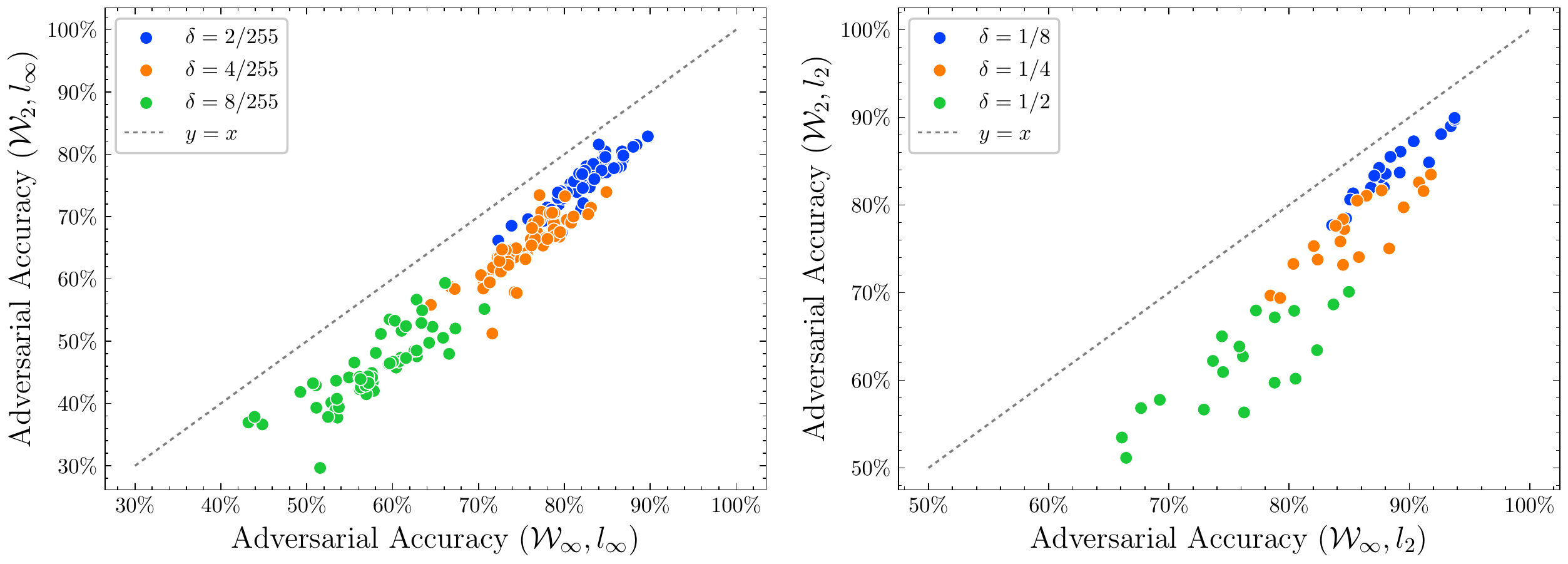}
    \caption{Shortfall of WD-adversarial accuracy with different metrics \(l_{\infty}\) (left) and \(l_{2}\) (right).}
    \label{fig-shortfall}   
\end{figure}

\paragraph{Bounds on Adversarial Accuracy.}
We report in Table \ref{tab:comput_time} the computation time of our proposed bounds \(\cR^l_\delta= \min\{\widetilde\cR_\delta^l,\overline\cR_\delta^l\}\) in \eqref{rl} and \(\cR_{\delta}^{u}\) in \eqref{ru} with the computation time of \(\cR_{\delta}\) obtained from AutoAttack. 
Computing our proposed bounds $\cR^l,\cR^u$ is orders of magnitude faster than performing an attack to estimate $\cR$. This also holds for \emph{distributional} threat attacks.
\begin{table}[htbp]

\small
       
       \caption{Computation times of \((\cW_{\infty},l_{\infty})\), \(\delta=8/255\) attack for one mini-batch of size $100$, in seconds. We compute \(\cR\) by AutoAttack and average the computation time over models on RobustBench grouped by their architecture.}
\begin{tabular}{ccccccc}
\toprule
 & PreActResNet-18 & ResNet-18 & ResNet-50 & WRN-28-10 & WRN-34-10 & WRN-70-16 \\
 \midrule
 \(\cR\)           & 197             & 175       & 271       & 401              & 456              & 2369        \\
\(\cR^{l}\&\cR^{u}\)            & 0.52            & 0.49      & 0.17      & 0.55             & 0.53             & 1.46             \\

\bottomrule
\end{tabular}
\label{tab:comput_time}
\end{table}

To illustrate the applications of Theorem \ref{thm-metric}, we plot the bounds \(\cR^l\) and \(\cR^u\) against \(\cR\) for neural networks on RobustBench. The results are plotted in Figure \ref{fig:bound} and  showcase the applicability of our bounds across different architectures.\footnote{We use all 60 available networks on RobustBench (model zoo) for $l_\infty$ and all 20 available networks for $l_2$.} Note that as $\delta$ increases we are likely to go outside of the linear approximation regime, see Figure \ref{fig-fo}. Indeed, in Appendix \ref{ap-rpb} we plot the results for pointwise attack with $\delta=8/255$ where some of neural networks have a lower bound \(\cR^{l}\) greater than the reference \(\cR\). Note that smaller $\delta$ values are suitable for the stronger $\cW_2$-distributional attack. 
For \emph{pointwise} threat models (top row) we compute the bounds using  CE loss. 
For \emph{distributional} threat models (bottom row), reference adversarial accuracy is obtained from a W-PGD-DLR attack and, accordingly, we use ReDLR loss to compute \(\cR^{u}\) and \(\cR^{l}\). 
In this case, the width of the gap between our upper and lower bounds varies significantly for different DNNs. To improve the bounds, instead of $\cR^l$,  we could estimate $V(\delta)$ and use the lower bound in \eqref{eq:thmlowerbound}. This offers a trade-off between computational time and accuracy which is explored further in Appendix \ref{ap-rpb}.

   \begin{figure}[htbp]
       \centering
       \includegraphics[width=\linewidth]{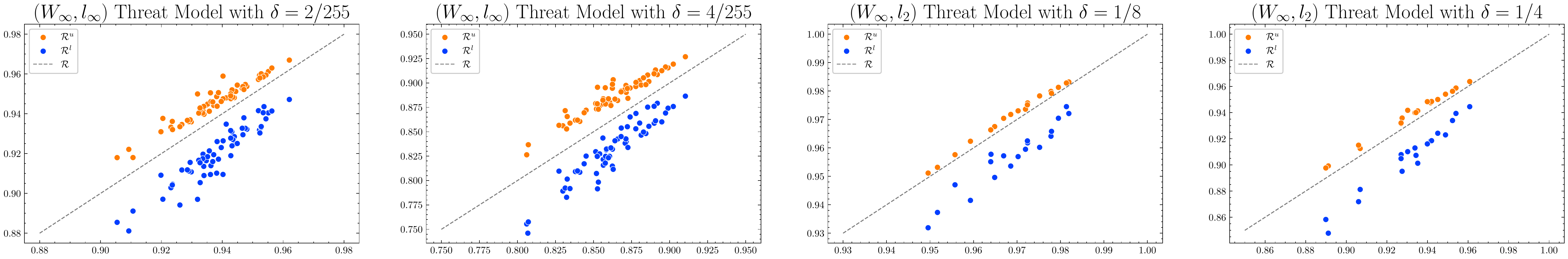}
       \newline
              \includegraphics[width=\linewidth]{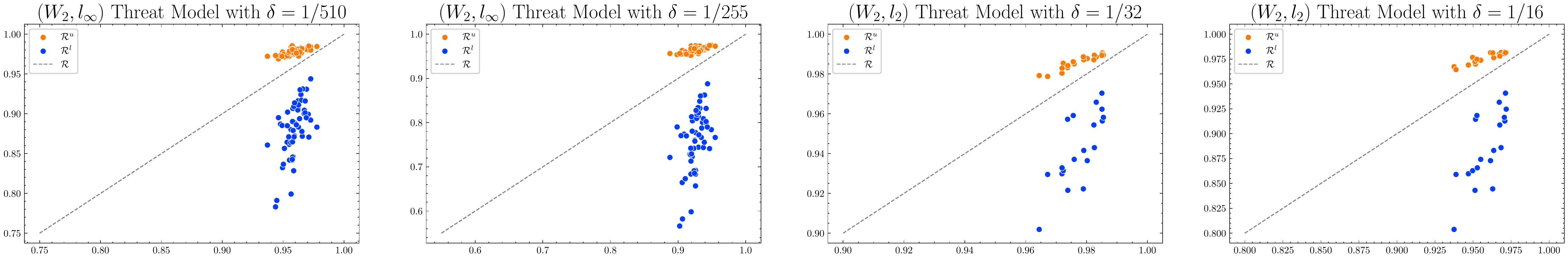}
\caption{\(\cR^{u}\) \& \(\cR^{l}\) versus \(\cR\). Top row: \(\cW_{\infty}\)-attack with bounds  computed based on CE loss across neural networks on RobustBench.
Bottom row: \(\cW_2\)-attack for the same sets of neural networks with bounds computed using ReDLR loss. 
}
\label{fig:bound}
   \end{figure}






\section{Limitations and future work}

\paragraph{Future work.} We believe our research opens up many avenues for future work. These include: developing stronger attacks under distributional threat models, testing the performance of the two training algorithms derived here and investigating further sensitivity-based ones, as well as analyzing the relation between the values and optimizers in \eqref{eqn:DROtesttrue}, verifying empirical performance of our out-of-sample results, including Corollary \ref{cor:aaboundempirical}, and extending these to out-of-distribution performance. 

\paragraph{Broader Impact.} Our work contributes to the understanding of robustness of DNN classifiers. We believe it can help users in designing and testing DNN architectures. It also offers a wider viewpoint on the question of robustness and naturally links the questions of adversarial attacks, out-of-sample performance, out-of-distribution performance and Knightian uncertainty. We provide computationally efficient tools to evaluate robustness of DNNs. However, our results are asymptotic and hence valid for small attacks and we acknowledge the risk that some users may try to apply the methods outside of their applicable regimes. Finally, in principle, our work could also enhance understanding of malicious agents aiming to identify and attack vulnerable DNN-based classifiers.

\section*{Acknowledgements}
The authors are grateful to Johannes Wiesel for his most helpful comments and suggestions in the earlier stages of this project. 
JO gratefully acknowledges the support from St John's College, Oxford. YJ's research is supported by the EPSRC Centre for Doctoral Training in Mathematics of Random Systems: Analysis, Modelling and Simulation (EP/S023925/1). XB and GH's work, part of their research internship with JO, was supported by the Mathematical Institute and, respectively, St John's College and St Anne's College, Oxford. 

\bibliography{RobustSGD}

\appendix
\section{Bounds on Adversarial Accuracy}
\label{ap-baa}
Recall that Proposition \ref{prop-tower} states the following tower-like property
\begin{equation*}
    V(\delta)=\sup_{Q\in B_{\delta}(P)}\E_{Q}[C(\delta)\1_{S}+W(\delta)\1_{S^c}]=o(\delta),
\end{equation*}
where
\begin{equation*}
    C(\delta)=\sup_{Q\in B_{\delta}(P)}\E_{Q}[J_{\theta}(x,y)|S] \quad \text{and}\quad W(\delta)=\sup_{Q\in B_{\delta}(P)}\E_{Q}[J_{\theta}(x,y)|S^c].
\end{equation*}
\label{ap-proof}
\begin{proof}[Proof of Proposition \ref{prop-tower}]
    One direction follows directly from the usual tower property of conditional expectation:
    \begin{align*}
        V(\delta) & =\sup_{Q\in B_{\delta}(P)}\E_{Q}[J(x,y)]=\sup_{Q\in B_{\delta}(P)}\E_{Q}[\E_{Q}[J(x,y)|\sigma(S)]] \\
                  & \leq\sup_{Q\in B_{\delta}(P)}\E_{Q}[C(\delta)\1_{S}+W(\delta)\1_{S^{c}}].
    \end{align*}
    For the other direction, note that 
    \begin{equation*}
        Q(E|S)=Q(E\cap S)/Q(S)\quad \text{and}\quad Q(E|S^{c})=Q(E\cap S^{c})/Q(S^{c}),
    \end{equation*}
     are well defined for any Borel $E$ by Assumption \ref{asmp-q}. Take an arbitrary \(\varepsilon>0\) and \(Q_{c},Q_{w}\in B_{\delta}(P)\) such that
    \begin{equation*}
        \E_{Q_{c}}[J(x,y)|S]\geq C(\delta)-\varepsilon\quad\text{and}\quad \E_{Q_{w}}[J(x,y)|S^{c}]\geq W(\delta)-\varepsilon.
    \end{equation*}
    We further take \(Q_{\star}\in B_{\delta}(P)\) such that
    \begin{equation*}
        \E_{Q_{\star}}[C(\delta)\1_{S}+W(\delta)\1_{S^{c}}]\geq \sup_{Q\in B_{\delta}(P)}\E_{Q}[C(\delta)\1_{S}+W(\delta)\1_{S^{c}}]-\varepsilon,
    \end{equation*}
    and write distribution \(\widetilde{Q}\) given by
    \begin{equation*}
        \widetilde{Q}(E)=Q_{c}(E|S)Q_{\star}(S)+Q_{w}(E|S^{c})Q_{\star}(S^{c}).
    \end{equation*}
    These give us
    \begin{align*}
        \sup_{Q\in B_{\delta}(P)}\E_{Q}[C(\delta)\1_{S}+W(\delta)\1_{S^{c}}]
        \leq & \E_{Q_{\star}}[C(\delta)\1_{S}+W(\delta)\1_{S^{c}}]+\varepsilon                            \\
        \leq & \E_{Q_{\star}}\bigl[\E_{Q_{c}}[J_{\theta}(x,y)|S]\1_{S}+\E_{Q_{w}}[J_{\theta}(x,y)|S^{c}]\1_{S^{c}}\bigr]+3\varepsilon \\
        =    & \E_{\widetilde{Q}}[J_{\theta}(x,y)]+3\varepsilon.
    \end{align*}
    Recall Assumption \ref{asmp-q} (ii) gives for any \(Q\in B_{\delta}(P)\)
    \[
        \cW_{p}(Q(\cdot|S),P(\cdot|S))+\cW_{p}(Q(\cdot|S^{c}),P(\cdot|S^{c}))= o(\delta).
    \]
    Now we take \(\pi_{c}\in\Pi(Q_{\star}(\cdot|S),Q_{c}(\cdot|S))\) such that \(\E_{\pi_{c}}[d(X,Y)^{p}]=\cW_{p}(Q_{\star}(\cdot|S),Q_{c}(\cdot|S))^{p}\), and similarly \(\pi_{w}\in\Pi(Q_{\star}(\cdot|S^{c}),Q_{w}(\cdot|S^{c}))\).
    Then by definition of \(\widetilde{Q}\), we have \(\pi=Q_{\star}(S)\pi_{c}+Q_{\star}(S^{c})\pi_{w}\in\Pi(Q_{\star},\widetilde{Q})\).
    Moreover, we derive
    \begin{align*}
        \cW_{p}(Q_{\star},\widetilde{Q})^{p}\leq \E_{\pi}[d(X,Y)^{p}] & =Q_{\star}(S)\E_{\pi_{c}}[d(X,Y)^{p}]+Q_{\star}(S^{c})\E_{\pi_{w}}[d(X,Y)^{p}]                                                    \\
                                                                    & = Q_{\star}(S)\cW_{p}(Q_{\star}(\cdot|S),Q_{c}(\cdot|S))^{p}+Q_{\star}(S^{c})\cW_{p}(Q_{\star}(\cdot|S^{c}),Q_{w}(\cdot|S^c))^{p} \\
                                                                    & =o(\delta^{p}),
    \end{align*}
    which implies \(\W_{p}(P,\widetilde{Q})=\delta+o(\delta)\) by triangle inequality.
    Hence, by \(L\)-Lipschitzness of \(J_{\theta}\) and \citep[Appendix Corollary 7.5]{BDOW21} we obtain
    \begin{equation*}
        \sup_{Q\in B_{\delta}(P)}\E_{Q}[C(\delta)\1_{S}+W(\delta)\1_{S^{c}}]\leq V(\delta+o(\delta))+3\varepsilon\leq V(\delta)+o(\delta)+3\varepsilon.
    \end{equation*}
    Finally, by taking \(\varepsilon\to 0\), we deduce
    \begin{equation*}
        V(\delta)=\sup_{Q\in B_{\delta}(P)}\E_{Q}[C(\delta)\1_{S}+W(\delta)\1_{S^{c}}]+o(\delta)
    \end{equation*}
    which concludes the proof of Proposition \ref{prop-tower}.
\end{proof}

Now, we present the proof of the lower bound estimate in Theorem \ref{thm-metric},
\begin{equation*}
    \cR_{\delta}=\frac{A_{\delta}}{A}\geq \frac{W(0)-V(0)}{W(0)-V(0)}.
\end{equation*}
\begin{proof}[Proof of Theorem \ref{thm-metric}]
    By adding and substracting $W(\delta)\1_{S}$, Proposition \ref{prop-tower} now gives
    \begin{align*}
        V(\delta) & =\sup_{Q\in B_{\delta}(P)}\E_{Q}[C(\delta)\1_{S}+W(\delta)\1_{S^{c}}]+o(\delta)    \\
                  & =\sup_{Q\in B_{\delta}(P)}\E_{Q}[W(\delta)-(W_{\delta}-C(\delta))\1_{S}]+o(\delta) \\
                  & =W(\delta)-(W(\delta)-C(\delta))A_{\delta}+o(\delta).
    \end{align*}
    Naturally, we can rewrite loss \(V(0)\) using the usual tower property
    \begin{align*}
        V(0) & =AC(0)+(1-A)W(0) = W(0)-(W(0)-C(0))A.
    \end{align*}
    Together these yield
    \begin{align*}
        V(\delta)-V(0) & =(1-A_{\delta})(W(\delta)-W(0))+A_{\delta}(C(\delta)-C(0))+(W(0)-C(0))(A-A_{\delta}) +o(\delta) \\
                       & \geq (W(0)-C(0))(A-A_{\delta}) +o(\delta).
    \end{align*}
    Plugging in \(C(0)=[V(0)-(1-A)W(0)]/A\) completes the proof.
\end{proof}

\section{Bounds on Out-of-Sample Performance}
\label{ap-oos}
The concentration inequality in \cite{FG15} is pivotal to derive the out-of-sample performance bounds.
It characterizes how likely an empirical measure can deviate from its generating distribution.
We note that recently \citep{LPW23,ORV+23} obtained concentration inequalities considering other transport costs, including sliced Wasserstein distances which are of particular interest for regression tasks.
Recall we denote  \(\trP\) as the empirical measure of training set \(\cD_{tr}\) with size \(N\) and \(\ttP\) as the empirical measure of test set \(\cD_{tt}\) with size \(M\).
We use the same \(\widehat{\ \ \,}\), \(\widecheck{\ \ \,}\)  notations to denote quantities computed from \(\trP\) and \(\ttP\), respectively.
We restate the concentration inequality  as
\begin{equation*}
    \bbP(\cW_{p}(\trP,P)\geq \varepsilon)\leq K \exp(-KN\varepsilon^{n}),
\end{equation*}
where \(K\) is a constant only depending on \(n\) and \(p\).
For convenience, \(K\) might change from line to line in this section.
Together with Theorem \ref{thm-metric}, we give an out-of-sample clean accuracy guarantee in Corollary \ref{cor-oofs}.
\begin{proof}[Proof of Corollary \ref{cor-oofs}]
    By triangle inequality and concentration inequality, we have
    \begin{equation*}
        \bbP( \cW_{p}(\ttP,\trP)\geq 2\varepsilon)\leq \bbP( \cW_{p}(\ttP,P)\geq \varepsilon)+\bbP( \cW_{p}(P,\trP)\geq \varepsilon)\leq 2K \exp(-K\varepsilon^{n}\min\{M,N\}).
    \end{equation*}
    With Theorem \ref{thm-metric}, it implies that with probability at least \(1-2K \exp(-K\varepsilon^{n}\min\{M,N\})\), we have
    \begin{equation*}
        \widecheck{A}\geq \widehat{A}_{2\varepsilon}\geq \widehat{A}\widehat{R}^{l}_{2\varepsilon} + o(\varepsilon).
    \end{equation*}
\end{proof}

The following results provide a guarantee on the out-of-sample adversarial performance.
\begin{proof}[Proof of Theorem \ref{thm:ofs}]
    Estimates in \citet{FG15} imply that with probability at least \(1-K \exp(-KN\varepsilon^{n})\), we have \(B_{\delta}(\trP)\subseteq B_{\delta+\varepsilon}(P)\).
    Hence, we derive \(V(\delta)\leq \widehat{V}(\delta+\varepsilon)\).
    On the other hand, since \(J_{\theta}\) is \(L\)-Lipschitz,  from \citep[Appendix Corollary 7.5]{BDOW21} we obtain
    \begin{equation*}
        \widehat{V}(\delta+\varepsilon)=\widehat{V}(\delta)+\varepsilon\sup_{Q\in B_{\delta}^{\star}(\trP)}\Bigl(\E_{Q}\|\nabla_{x}J_{\theta}(x,y)\|_{*}^{q}\Bigr)^{1/q} + o(\varepsilon),
    \end{equation*}
    where   \(B_{\delta}^{\star}(\trP)=\arg\max_{Q\in B_{\delta}(\trP)}\E_{Q}[J_{\theta}(x,y)]\).
    Combining above results, we conclude that with probability at least \(1-K \exp(-KN\varepsilon^{n})\)
    \begin{equation*}
        V(\delta)\leq \widehat{V}(\delta) + \varepsilon\sup_{Q\in B_{\delta}^{\star}(\trP)}\Bigl(\E_{Q}\|\nabla_{x}J_{\theta}(x,y)\|_{*}^{q}\Bigr)^{1/q} + o(\varepsilon)\leq \widehat{V}(\delta)+ L\varepsilon.
    \end{equation*}

\end{proof}

\begin{proof}[Proof Corollary \ref{cor:aaboundempirical}]
    By Theorem \ref{thm-metric}, we have
    \[
        \Delta A_{\delta}(P)\leq \frac{V(\delta)-V(0)}{W(0)-C(0)}+o(\delta).
    \]
    We now bound the numerator and denominator separately.
    By taking \(\varepsilon=\delta\) in Theorem \ref{thm:ofs}, we have with probability at least \(1-K \exp(-KN\delta^{n})\),
    \(
    V(\delta)\leq \widehat{V}(\delta)+ L\delta.
    \)
    Similarly, we can also show that \(V(0)\geq \widehat{V}(0)-L\delta\).
    Hence, we have with probability at least \(1-K\exp(-KN\delta^{n})\),
    \begin{equation*}
        V(\delta)-V(0)\leq \widehat{V}(\delta)-\widehat{V}(0)+2L\delta.
    \end{equation*}
    For the denominator, notice that \(\bbP(\trP\in B_{\delta}(P))\geq 1- K\exp(-KN\exp(\delta^{n}))\).
    By Assumption \ref{asmp-q} (ii), we have with probability at least \(1-K\exp(-KN\delta^{n})\),
    \begin{equation*}
        \cW_{p}(\trP(\cdot|S),P(\cdot|S))+\cW_{p}(\trP(\cdot|S^{c}),P(\cdot|S^{c})) =o(\delta).
    \end{equation*}
    This implies
    \begin{equation*}
        |C(0)-W(0)-\widehat{C}(0)+\widehat{W}(0)|=o(\delta).
    \end{equation*}
    Combining above results, we conclude that with probability at least \(1-K\exp(-KN\delta^{n})\),
    \begin{equation*}
        \Delta A_{\delta}(P)\leq \frac{\widehat{V}(\delta)-\widehat{V}(0)}{\widehat{W}(0)-\widehat{C}(0)} +\frac{2L\delta}{\widehat{W}(0)-\widehat{C}(0)}+o(\delta).
    \end{equation*}
\end{proof}

\section{W-PGD Algorithm}
\label{ap-aa}
We give the details of W-PGD algorithm implemented in this paper.
Recall in Section \ref{sec-WD}, we propose
\begin{equation}
    x^{t+1}=\proj_{\delta}\bigl(x^{t}+\alpha h(\nabla_{x}J_{\theta}(x^{t},y))\|\Upsilon^{-1}\nabla_{x}J_{\theta}(x^{t},y)\|_{s}^{q-1}\bigr),
\end{equation}
where we take $\|\cdot\|=\|\cdot\|_{r}$ and hence \(h\) is given by \(\la h(x), x\ra=\|x\|_{s}\). In particular, \(h(x)=\sgn(x)\) for \(s=1\) and \(h(x)=x/\|x\|_{2}\) for \(s=2\).
The projection step \(\proj_{\delta}\)  is based on any off-the-shelf optimal transport solver \(\scrS\) which pushes the adversarial images back into the Wasserstein ball along the geodesics.
The solver \(\scrS\) gives the optimal matching \(T\) between the original test data \(\cD_{tt}\) and the perturbed test data \(\cD'_{tt}\).
Formally, \(\proj_{\delta}\) maps
\begin{equation*}
    x'\mapsto x+\delta d^{-1}(T(x)-x),
\end{equation*}
where \(d=\Bigl\{\frac{1}{|\cD_{tt}|}\sum_{(x,y)\in \cD_{tt}}\|T(x)-x\|_{r}^{p}\Bigr\}^{1/p}\) the Wasserstein distance between \(\ttP\) and \(\ttP'\).
See Algorithm \ref{alg-pgd} for pseudocodes.
In numerical experiments, due to the high computational cost of the OT solver, we always couple each image with its own perturbation.

\begin{algorithm}[H]
    \label{alg-pgd}
    \DontPrintSemicolon
    \LinesNotNumbered

    \SetKwInput{KwInput}{Input}
    \SetKwInput{KwOutput}{Output}
    \KwInput{Model parameter \(\theta\), attack strength \(\delta\), ratio \(r\), iteration step \(I\), OT solver \(\scrS\);}

    \KwData{Test set \(\cD_{tt}=\{(x,y)|(x,y)\sim P\}\) with size \(M\);}
    \SetKwProg{Def}{def}{:}{}
    \Def{\(\proj_{\delta}(\cD_{tt},\cD_{tt}')\)}{
    \(T=\scrS(\cD_{tt},\cD_{tt}')\);\tcp*{Generate transport map from OT solver}
    \(d=\Bigl\{\frac{1}{M}\sum_{(x,y)\in \cD_{tt}}\|T(x)-x\|_{r}^{p}\Bigr\}^{1/p}\);\tcp*{Calculate the Wasserstein distance}

    \For{\((x,y)\) in \(\cD_{tt}\)}{
        \(x'\leftarrow  x +\delta d^{-1} (T(x)-x)\); \tcp*{ Project back to the Wasserstein ball}
        \(x'\leftarrow \mathrm{clamp}(x',0,1)\);\;
    }
    \KwRet{\(\cD_{tt}'\)}.}
    \Def{\(\mathrm{attack
    }(\cD_{tt})\)}{
    \(\alpha\leftarrow r\delta/I\); \tcp*{Calculate stepsize}
    \(\cD_{tt}^{adv}\leftarrow \cD_{tt}\);\;
    \For{\(1 \leq i\leq I\)}{
    \(\Upsilon=\Bigl(\frac{1}{M}\sum_{(x,y)\in \cD_{tt}^{adv}}\|\nabla_{x}J_{\theta}(x,y)\|^{q}_{s}\Bigr)^{1/q};\) \tcp*{Calculate \(\Upsilon\)}
    \For{\((x,y)\in \cD_{tt}^{adv}\)}{
    \((x,y)\leftarrow \bigl(x+\alpha h(\nabla_{x}J_{\theta}(x,y))\|\Upsilon^{-1}\nabla_{x}J_{\theta}(x,y)\|_{s}^{q-1},y\bigr);\)\;
    }
    \(\cD_{tt}^{adv}=\proj_{\delta}(\cD_{tt},\cD_{tt}^{adv})\);\;
    \KwRet{\(\cD_{tt}^{adv}\).}
    }
    }
    \caption{W-PGD Attack}
\end{algorithm}

\section{Wasserstein Distributionally Adversarial Training}
\label{ap-at}
Theorem \ref{thm-u} offers natural computationally tractable approximations to the W-DRO training objective
\begin{equation*}
    \inf_{\theta\in\Theta}\sup_{Q\in B_{\delta}(P)}\E_{Q}[J_{\theta}(x,y)]
\end{equation*}
and its extension
\begin{equation*}
    \inf_{\theta\in\Theta}\sup_{\pi\in \Pi_{\delta}(P,\cdot)}\E_{\pi}[J_{\theta}(x,y,x',y')].
\end{equation*}
First, consider a regularized optimization problem:
\begin{equation*}
    \inf_{\theta\in\Theta}\E_{P}[J_{\theta}(x,y)+\delta\Upsilon].
\end{equation*}
The extra regularization term \(\delta\Upsilon\) allows us to approximate the W-DRO objective above.
A similar approach has been studied in \citet{GG22} in the context of Neural ODEs, and \citet{SND18} considered Wasserstein distance penalization and used duality.

Training is done by replacing $P$ with $\trP$. Note that \(\trUpsilon\) is a statistics over the whole data set. In order to implement stochastic gradient descent method, an asynchronous update of the parameters is needed. We consider $\|\cdot\|_{*}=\|\cdot \|_s$ and, by a direct calculation, obtain
\begin{align*}
    \nabla_{\theta} \trUpsilon=\trUpsilon^{1-q} \E_{\trP}[\la\nabla_x\nabla_{\theta}J_{\theta}(x,y),\sgn(\nabla_x J_{\theta}(x,y))\ra\|\nabla_x J_{\theta}(x,y)\|_{s}^{q-1}].
\end{align*}
We calculate the term \(\trUpsilon^{1-q}\) from the whole training dataset using parameter \(\theta^{*}\) from previous epoch;
we estimate
\[ \E_{\trP}[\la\nabla_x\nabla_{\theta}J_{\theta}(x,y),\sgn(\nabla_x J_{\theta}(x,y))\ra\|\nabla_x J_{\theta}(x,y)\|_{s}^{q-1}]\] on a mini-batch and update current parameter \(\theta\) after each batch.
At the end of an epoch, we update \(\theta^{*}\) to \(\theta\).
See Algorithm \ref{alg-l} for pseudocodes.

Another classical approach consists in clean training the network but on the adversarial perturbed data.
To this we employ the Wasserstein FGSM attack described in Section \ref{sec-WD} and shift training data \((x,y)\) by
\[(x,y)\mapsto\Bigl(\proj_{\delta}\bigl(x+\delta h(\nabla_{x}J_{\theta}(x,y))\|\trUpsilon^{-1}\nabla_{x}J_{\theta}(x,y)\|_{s}^{q-1}\bigr),y\Bigr).\]
Similarly to the discussion above, an asynchronous update of parameters is applied, see Algorithm \ref{alg-d} for details. Empirical evaluation of the performance of Algorithms \ref{alg-l} and \ref{alg-d} is left for future research.

\begin{algorithm}[H]
    \label{alg-l}
    \DontPrintSemicolon
    \LinesNotNumbered

    \SetKwInput{KwInput}{Input}
    \SetKwInput{KwOutput}{Output}
    \KwInput{Initial parameter \(\theta_{0}\), hyperparameter \(\delta\), learning rate \(\eta\);}
    \KwData{ Training set \(\cD_{tr}=\{(x,y)|(x,y)\sim P\}\) with size \(N\);}
    \(\theta^{*}\leftarrow\theta_{0}, \theta\leftarrow \theta_{0};\)\;
    \Repeat{the end condition.}{
    \(\Upsilon=\Bigl(\frac{1}{N}\sum_{(x,y)\in \cD_{tr}}\|\nabla_{x}J_{\theta^{*}}(x,y)\|^{q}_{s}\Bigr)^{1/q};\)\tcp*{Calculate \(\Upsilon\) from \(\theta^{*}\)}
    \Repeat{the end of epoch;}
    {
    Generate a mini-batch \(B\) with size \(|B|\);\;
    \tcp{Calculate gradient \(\nabla_{\theta}J_{\theta}(x,y)\)}
    \(\nabla_{\theta}J_{\theta}(x,y)=\frac{1}{|B|}\sum_{(x,y)\in B}\nabla_{\theta}J_{\theta}(x,y);\)\;
    \tcp{Calculate gradient \(\nabla_{\theta}\Upsilon\)}
    \(\nabla_{\theta}\Upsilon=\Upsilon^{1-q}\frac{1}{|B|}\sum_{(x,y)\in B}\la\nabla_x\nabla_{\theta}J_{\theta}(x,y), h(\nabla_x J_{\theta}(x,y))\ra\|\nabla_x J_{\theta}(x,y)\|_{s}^{q-1};\)\;
    \tcp{Update \(\theta\)  by   stochastic gradient descent}
    \(\theta\leftarrow\theta -\eta(\nabla_{\theta}J_{\theta}(x,y)+\delta \nabla_{\theta}\Upsilon)\);
    }

    \(\theta^{*}\leftarrow \theta;\)
    }
    \caption{Loss Regularization}
\end{algorithm}

\begin{algorithm}[H]
    \label{alg-d}
    \DontPrintSemicolon
    \LinesNotNumbered

    \SetKwInput{KwInput}{Input}
    \SetKwInput{KwOutput}{Output}
    \KwInput{Initial parameter \(\theta_{0}\), hyperparameter \(\delta\), learning rate \(\eta\);}
    \KwData{ Training set \(\cD_{tr}=\{(x,y)|(x,y)\sim P\}\) with size \(N\);}
    \(\theta^{*}\leftarrow\theta_{0}, \theta\leftarrow \theta_{0};\)\;
    \Repeat{the end condition.}{
    \(\Upsilon=\Bigl(\frac{1}{N}\sum_{(x,y)\in \cD_{tr}}\|\nabla_{x}J_{\theta^{*}}(x,y)\|^{q}_{s}\Bigr)^{1/q};\)\tcp*{ Calculate \(\Upsilon\) from \(\theta^{*}\)}
    \Repeat{the end of epoch;}
    {
    Generate a mini-batch \(B\) with size \(|B|\);\;
    \tcp{Do W-FGSM attack on \(B\)}
    \((x,y)\leftarrow \bigl(\proj_{\delta}(x+\delta h(\nabla_{x}J_{\theta}(x,y))\|\Upsilon^{-1}\nabla_{x}J_{\theta}(x,y)\|_{s}^{q-1}),y\bigr);\)\;
    \tcp{Update \(\theta\)  by   stochastic gradient descent}
    \(\theta\leftarrow \theta -\eta \frac{1}{|B|}\sum_{(x,y)\in B} \nabla_{\theta}J_{\theta}(x,y)\);
    }
    \(\theta^{*}\leftarrow \theta;\)
    }
    \caption{Adversarial Data Perturbation}
\end{algorithm}

\section{Robust Performance Bounds}
\label{ap-rpb}

As pointed out in Section \ref{sec-num}, with attack budget \(\delta=8/255\) some neural networks have lower bounds  \(\cR^{l}\) surpassing the reference value \(\cR\) obtained from AutoAttack.
It is because for most of neural networks \(\delta=8/255\) is outside the linear approximation region of the adversarial loss \(V(\delta)\), and we underestimate \(V(\delta)\) by using first order approximations in Theorem \ref{thm-u}.
In Figure \ref{fig:winf_8_255}, we plot our proposed bounds \(\cR^{u}\) and \(\cR^{l}\) against \(\cR\) for \((\cW_{\infty},l_{\infty})\) threat model with \(\delta=8/255\).

\begin{figure}[htb!]
    \centering
    \includegraphics[width=0.6\linewidth]{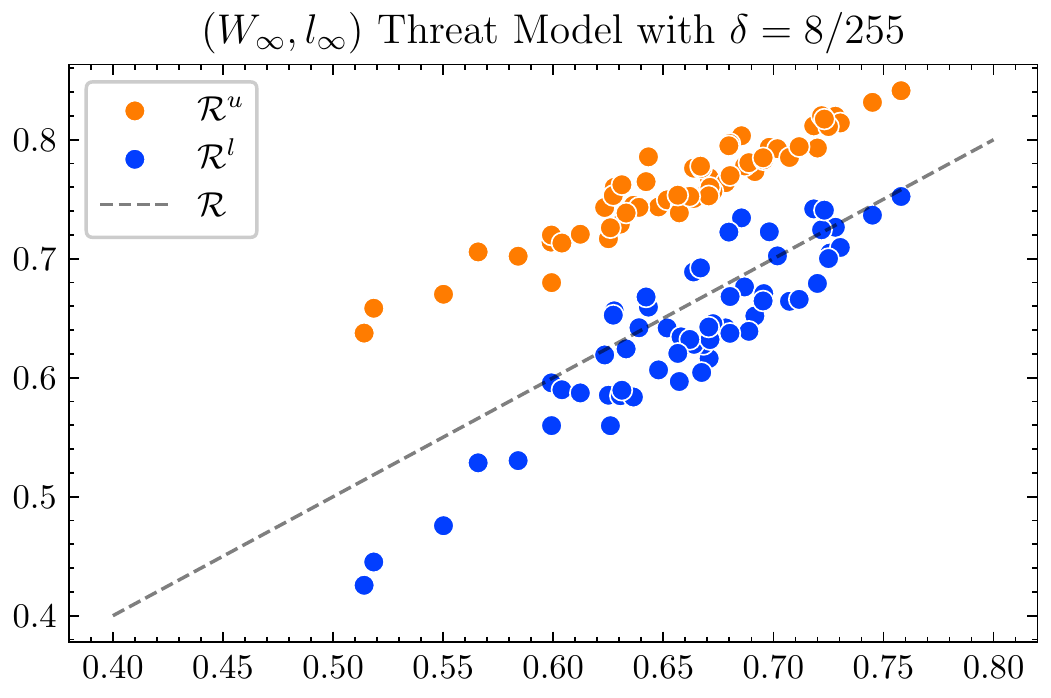}

    \caption{\(\cR^{u}\&\cR^{l}\) versus \(\cR\).   Bounds are computed based on CE loss across neural networks on RobustBench under a \((\cW_{\infty},l_{\infty})\) threat model with budget \(\delta=8/255\). At this $
    \delta$ linear approximation may be inefficient as seen from the blue dots crossing the diagonal.
    }
    \label{fig:winf_8_255}
\end{figure}

In general, to compute more accurate lower bounds on the adversarial accuracy, as explained in Section \ref{sec-num}, we can consider the first lower bound in \eqref{eq:thmlowerbound}.  We thus introduce \(\cR^{l}(n)\) given by
\begin{equation*}
    \cR^{l}(n)=\frac{W(0)-V(\delta,n)}{W(0)-V(0)},
\end{equation*}
where \(V(\delta,n)\) is the approximated adversarial loss computed by a W-PDG-(\(n\)) attack.
In Figures \ref{fig:tradeoff_linf} and \ref{fig:tradeoff_l2}, we include plots for different bounds of \(\cR\) under \(\cW_{2}\) threat models which illustrate the changing performance of the lower bound in  Theorem \ref{thm-metric} as $V(\delta)$ is computed to an increasing accuracy.
We achieve this performing a W-PDG-(\(n\)) attack, where $n=5,50$.
An $n=50$ attack takes 10 times more computational time than the $n=5$ attack and the latter is 5 times more computational costly than the one-step bound \(\cR^{l}\).
The plots thus illustrate a trade-off between computational time and accuracy of the proposed lower bound.
For clarity, we also point out that Figure \ref{fig:tradeoff_linf} has a different scaling from the other plots.

\begin{figure}[htb!]
    \centering
    \includegraphics[width=\linewidth]{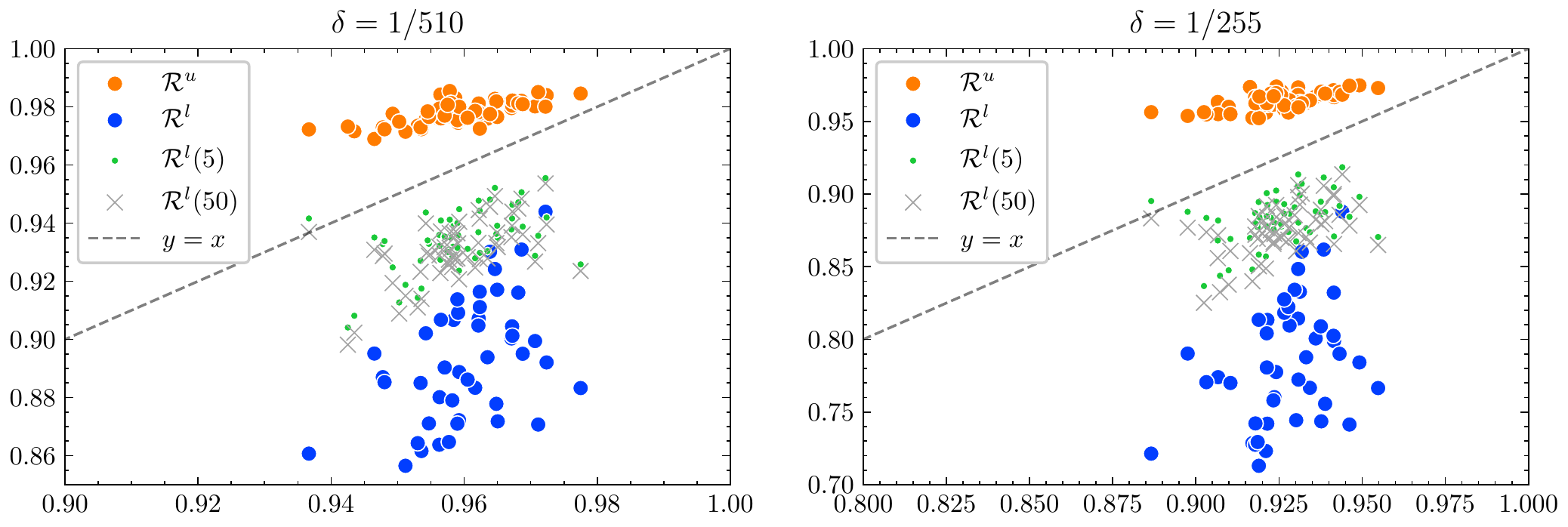}

    \caption{Comparison of lower bounds  computed from different W-PDG-(\(n\)) attacks, where \(n=5,50\). Bounds are computed based on CE loss across neural networks on RobustBench under \((\cW_{2},l_{\infty})\) threat models with budget \(\delta=1/510,1/255\).}
    \label{fig:tradeoff_linf}
\end{figure}
\begin{figure}[htb!]
    \centering
    \includegraphics[width=\linewidth]{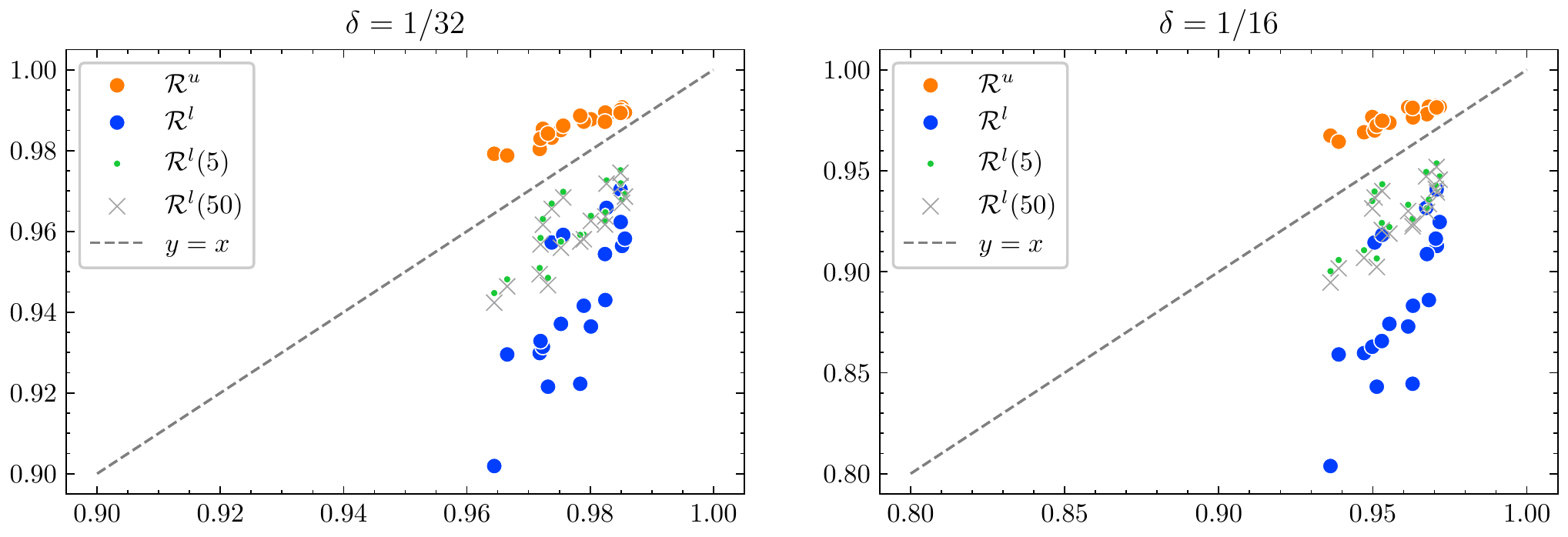}

    \caption{Comparison of lower bounds  computed from different W-PDG-(\(n\)) attacks, where \(n=5,50\).  Bounds are computed based on ReDLR loss across neural networks on RobustBench under \((\cW_{2},l_{2})\) threat models with budget \(\delta=1/32,1/16\).}
    \label{fig:tradeoff_l2}
\end{figure}
\section{Additional Numerical Results}
\label{ap-numerical}

In Tables \ref{tab:shorfall_l2} and \ref{tab:shorfall_linf}, we report the clean accuracy and the adversarial accuracy under different threat models for all  but 4 neural networks available on RobustBench (model zoo).
All networks are named after their labels on RobustBench (model zoo).
We remove \texttt{Standard} \(l_{\infty}\)-network and  \texttt{Standard} \(l_{2}\)-network because they are not robust to any attacks.
In addition, we also remove \texttt{Kang2021Stable} which contains NeuralODE blocks  and \texttt{Ding2020MMA} which has a huge gap between adversarial accuracies obtained from PGD and AutoAttack.

In Tables  \ref{tab:r-winf-l2}, \ref{tab:r-w2-l2}, \ref{tab:r-winf-linf} and \ref{tab:r-w2-linf}, we include the complete list of \(\cR\) and its bounds used in Figure \ref{fig:bound}.
We write \(\Delta \cR^{u}=\cR^{u}-\cR\) and \(\Delta \cR^{l}=\cR^{l}-\cR\).

\begin{table}[htbp]
    \centering
    \small
    \caption{Complete list of adversarial accuracies under \((\cW_{\infty},l_{\infty})\) and \((\cW_{2},l_{\infty})\) threat models.}

    \label{tab:r-w2-linf}
\end{table}

\end{document}